\documentclass{article}

\usepackage[preprint]{neurips_2026}

\usepackage[utf8]{inputenc}
\usepackage[T1]{fontenc}
\usepackage[breaklinks=true]{hyperref}
\usepackage{url}
\usepackage{booktabs}
\usepackage{amsfonts}
\usepackage{amsmath}
\usepackage{amssymb}
\usepackage{nicefrac}
\usepackage[protrusion=false]{microtype}
\usepackage{xcolor}
\usepackage{graphicx}
\usepackage{multirow}
\usepackage{colortbl}
\usepackage{array}
\usepackage{amsthm}
\usepackage[ruled,lined,linesnumbered]{algorithm2e}
\usepackage{wrapfig}

\definecolor{darkred}{rgb}{0.7,0.0,0.0}
\definecolor{royalblue}{rgb}{0.25,0.41,0.88}
\definecolor{orange}{rgb}{0.93,0.53,0.18}

\newtheorem{theorem}{Theorem}

\newcommand{\ours}{\textsc{GOMA}}
\newcommand{\ourslong}{\underline{\textbf{G}}raph-\underline{\textbf{O}}ptimized \underline{\textbf{M}}ultimodal \underline{\textbf{A}}lignment}
\newcommand{\tabbest}[1]{\textcolor{darkred}{\textbf{#1}}}

\newcommand{\tabvalue}[1]{\shortstack[c]{#1\\{\footnotesize -}}}
\newcommand{\tabstat}[2]{\shortstack[c]{#1\\{\footnotesize$\pm$ #2}}}
\newcommand{\tabbestvalue}[1]{\shortstack[c]{\textcolor{darkred}{\textbf{#1}}\\{\footnotesize\textcolor{darkred}{-}}}}
\newcommand{\tabbeststat}[2]{\shortstack[c]{\textcolor{darkred}{\textbf{#1}}\\{\footnotesize\textcolor{darkred}{$\pm$ #2}}}}
\newcommand{\tabsecondvalue}[1]{\shortstack[c]{\textcolor{royalblue}{\textbf{#1}}\\{\footnotesize\textcolor{royalblue}{-}}}}
\newcommand{\tabsecondstat}[2]{\shortstack[c]{\textcolor{royalblue}{\textbf{#1}}\\{\footnotesize\textcolor{royalblue}{$\pm$ #2}}}}
\newcommand{\tabthirdvalue}[1]{\shortstack[c]{\textcolor{orange}{\textbf{#1}}\\{\footnotesize\textcolor{orange}{-}}}}
\newcommand{\tabthirdstat}[2]{\shortstack[c]{\textcolor{orange}{\textbf{#1}}\\{\footnotesize\textcolor{orange}{$\pm$ #2}}}}
\newcommand{\tabempty}{\shortstack[c]{-\\{\footnotesize -}}}
\newcommand{\tabhead}[1]{\textcolor{white}{\textbf{#1}}}
\newcommand{\expid}[1]{\path{#1}}

\title{GOMA: Toward Structure-Driven Multimodal Alignment from a Graph Signal Smoothing Perspective}

\author{\normalfont
Xu Wang$^{1}$ \quad Xunkai Li$^{2}$ \quad Yinlin Zhu$^{3}$ \quad Rong-Hua Li$^{2}$ \quad Guoren Wang$^{2}$\\
$^{1}$School of Airspace Science and Engineering, Shandong University, WeiHai, China\\
$^{2}$Department of Computer Science, Beijing Institute of Technology, Beijing, China\\
$^{3}$School of Computer Science and Engineering, Sun Yat-sen University, GuangZhou, China\\
\textbf{Correspondence to:} Rong-Hua Li \texttt{<lironghuabit@126.com>}
}

\begin{document}
\maketitle

\begin{abstract}
Multimodal alignment is commonly learned from isolated image-text pairs via CLIP-style dual encoders, leaving the relational context among entities largely unused.
Multimodal attributed graphs (MAGs), where nodes carry multimodal attributes and edges encode corpus structure, provide a natural setting for refining frozen vision-language embeddings.
This refinement is challenging: visual, textual, and cross-modal relations often induce different neighborhood geometries, while unrestricted graph propagation can quickly over-smooth retrieval representations.
Effectively leveraging graph context therefore requires simultaneously breaking modality-specific topological barriers, controlling the smoothing regime, and preserving informative smoothing before semantic boundaries collapse.
We propose \ourslong{} (\ours{}), a structure-driven post-alignment framework that views frozen multimodal embeddings as graph signals and addresses these requirements through a unified retrieval-oriented design.
\ours{} decouples three key design choices: where messages should flow, how multimodal evidence should propagate, and which smoothing depth should be retained. Concretely, it learns modality-aware propagation operators, performs finite-step coupled smoothing without diagonal cross-modal shortcuts, and adaptively reads out node-specific smoothing trajectories to preserve useful smoothing before collapse.
All experiments follow a transductive MAG retrieval protocol where the graph serves only as unlabeled context and diagonal self-pair edges are removed. On seven MAG benchmarks, \ours{} achieves state-of-the-art or tied state-of-the-art retrieval and remains substantially more stable than the strongest graph competitor, demonstrating that MAG structure can serve as an effective post-encoder for frozen multimodal embeddings.
\end{abstract}

\section{Introduction}
\label{sec:intro}

Multimodal retrieval maps images and texts into a shared representation space for similarity search, and underpins applications such as product search~\cite{Amazon2018}, recommendation~\cite{RedditS}, and graph-based multimodal analytics~\cite{MAGB,mmgraph}.
This problem has become increasingly graph-structured: modern retrieval corpora often couple rich multimodal content with nontrivial relations among entities.
In MAGs, a node typically denotes a product, post, news item, or other multimodal entity whose image and text are two attributes of the same object.
The retrieval target is therefore cross-modal entity recovery within a relational index (not one-image-to-many-caption matching in an open image collection). This setting naturally models product image--text linking, social-post grounding, and multimodal catalog alignment.
Vision-language pretraining has substantially strengthened pairwise image-text alignment~\cite{vilbert,uniter,oscar,clip,align,filip,albef,blip,blip2,SigLIP}, while multimodal retrieval and MAG-aware models increasingly show that relational structure can provide useful context~\cite{frome2013devise,faghri2017vse++,lee2018stacked,wang2019camp,li2019visual,gsmn,chen2020imram,chun2021probabilistic,MMGCN,MGAT,LGMRec,MAGB,mmgraph,graph4mm,MIG-GT,UniGraph2}.
These two trends point to a natural next step: instead of using graph structure only as auxiliary context, can we use it to directly refine frozen multimodal embeddings after pretraining?
However, existing MAG retrieval pipelines still do not fully resolve how such refinement should be designed when structural evidence is heterogeneous across modalities.

Despite their notable advances, existing multimodal retrieval pipelines on MAGs still suffer from two core limitations.
\textbf{(L1) Topology mismatch across modalities.}
As we later quantify in the empirical study (Section~\ref{sec:empirical}), visual and textual $k$NN neighborhoods on Toys share only 14.4\% overlap despite each being individually informative, while their neighbor purity values diverge substantially (0.586 visual vs.\ 0.648 textual vs.\ 0.732 structural).
Two items can be visually similar yet semantically different, or textually related while visually diverse.
Consequently, forcing all interactions to share one propagation graph, or trusting only the observed structure, can mix inconsistent evidence and weaken retrieval-oriented smoothing.
\textbf{(L2) Unrestricted graph propagation.}
Graph context is potentially valuable precisely because many hard queries are already supported by semantically relevant local neighbors, but that evidence cannot be propagated indiscriminately.
Section~\ref{sec:empirical} shows that on Toys, smoothing improves mean retrieval rank from 382.9 at depth~0 to 109.7 at depth~4, yet further propagation degrades it to 142.2 at depth~12, a clear finite-regime signature.
Once graph smoothing goes beyond its effective regime, semantic boundaries are gradually blurred and over-smoothing emerges~\cite{GCN,oversmoothing,gcn2,graphmae2}.
Consequently, naive full-graph smoothing may help difficult nodes at shallow depths yet later erode retrieval discriminability.
Together, \textbf{L1} and \textbf{L2} show that attaching a generic GNN after a pretrained encoder is not enough: effective MAG-based retrieval requires both \emph{the right structure to propagate on} and \emph{the right amount of propagation to retain}.

\emph{How can frozen multimodal embeddings be refined on MAGs despite topology mismatch and the risk of over-smoothing?}
Building upon these insights, we propose \ourslong{} (\ours{}), a structure-driven post-alignment framework from the perspective of graph signal smoothing.
Rather than replacing CLIP-like or recent multimodal foundation encoders~\cite{clip,SigLIP,blip2,flamingo,kosmos,imagebind,qwenvl}, \ours{} treats their frozen outputs as graph signals and refines them with lightweight graph-side learning.
The central contribution is not a more complex generic GNN, but a retrieval-oriented problem reformulation. MAG retrieval is cast as graph-signal post-alignment on frozen multimodal embeddings, where topology learning, finite coupled smoothing, and trajectory-level anti-collapse readout are optimized by a unified retrieval-oriented objective.
Our key insight is to decouple three decisions that existing pipelines tend to entangle.
\textbf{(1)} \emph{Where should messages flow?} \emph{Modality-Aware Topology Learning} learns separate propagation operators for visual, textual, and cross-modal channels instead of collapsing them into one shared graph (\textbf{L1}).
\textbf{(2)} \emph{How should evidence propagate?} \emph{Coupled Graph Signal Smoothing} exchanges local support between modalities through finite-step coupled smoothing, activating graph evidence behind hard queries while staying within the effective regime (\textbf{L2}).
\textbf{(3)} \emph{Which depth should be retained?} \emph{Anti-Collapse Adaptive Readout} selects the most informative smoothing depth per node from the full smoothing trajectory, preventing late-stage semantic collapse (\textbf{L2}).

\textbf{Our Contributions:}
\textbf{(1) In-depth Investigation.}
Through a controlled empirical study on multimodal attributed graphs, we identify topology mismatch across modalities and the finite effective regime of graph propagation as two core bottlenecks that limit existing graph-based retrieval pipelines, and we quantify their impact on hard-query recovery, propagation depth, and neighborhood quality.
\textbf{(2) Novel Method.}
We propose \ours{}, a structure-driven post-alignment framework that unifies Modality-Aware Topology Learning, Coupled Graph Signal Smoothing, and Anti-Collapse Adaptive Readout in one retrieval-oriented closed loop, explicitly addressing the two identified bottlenecks.
\textbf{(3) State-of-the-art Performance.}
Extensive experiments on seven MAG benchmarks under a unified transductive retrieval protocol show that \ours{} achieves state-of-the-art or tied state-of-the-art recall across all reported metrics, with especially large gains on noisier graphs and substantially lower seed variance than the strongest graph competitor.

\section{Problem Formulation}
\label{sec:problem}

\textbf{Multimodal attributed graphs.}
We consider a multimodal attributed graph (MAG)
$\mathcal{G} = (\mathcal{V}, \mathcal{E}, \mathbf{X}_v, \mathbf{X}_t)$,
where $\mathcal{V}$ is the node set ($N{=}|\mathcal{V}|$), $\mathcal{E}$ is the structural edge set, and each node $i \in \mathcal{V}$ carries a frozen image embedding $\mathbf{x}_{i,v} \in \mathbb{R}^{F_v}$ and a frozen text embedding $\mathbf{x}_{i,t} \in \mathbb{R}^{F_t}$ from a pretrained multimodal encoder.

\textbf{Alignment goal.}
Our goal is \emph{structure-driven multimodal alignment}: given a MAG with frozen pretrained features, learn a post-alignment that maps the two modalities into a shared embedding space such that each node's image and text representations become mutually retrievable.
This differs from conventional image-text retrieval~\cite{clip,faghri2017vse++,lee2018stacked} in two respects.
First, we operate on graph-structured entities rather than isolated image-caption pairs, so relational context between nodes can inform the alignment.
Second, the alignment itself is the primary objective, and cross-modal retrieval metrics (Recall@$K$, MRR, MeanR) serve as the evaluation instrument that measures how well the two modalities are brought together.

\textbf{Transductive MAG retrieval protocol.}
Following the MAG benchmarks~\cite{MAGB,mmgraph}, each node is a multimodal entity and the correct cross-modal match is the \emph{same} node's counterpart (i.e., the diagonal of the $N{\times}N$ similarity matrix).
This paired-node formulation contrasts with the standard Flickr30k / COCO setting, where one image maps to multiple captions in a disjoint query-gallery split~\cite{faghri2017vse++,lee2018stacked}; here the task measures whether the two modalities recover the same underlying entity inside a relational index.
We adopt a \emph{transductive MAG retrieval protocol}: all entities and observed relations define the retrieval corpus, so validation and test nodes are visible as unlabeled context during representation learning.
Only train-node pairs provide optimization supervision. Validation labels are used exclusively for checkpoint selection and test labels for final evaluation.
Candidate edges are built from frozen features and observed graph structure alone. Diagonal cross-modal self-pair edges are removed throughout, including those of validation and test nodes.
Neither validation nor test labels are used for optimization, candidate-edge construction, smoothing, or topology regularization.
Thus, gains reflect higher-order neighborhood evidence rather than label leakage.

\section{Empirical Study}
\label{sec:empirical}

Before presenting the full model, we first study three empirical questions that directly ground the two limitations raised in the Introduction:
\textbf{RQ1:} What is missing from image-text matching without graph context on MAGs?
\textbf{RQ2:} What is the effective propagation regime for retrieval-oriented multimodal smoothing?
\textbf{RQ3:} What kind of topology best supports multimodal graph refinement?

\textbf{Datasets and Topology Variants.}
We conduct controlled studies on Toys and Grocery under the same frozen-backbone family, focusing on phenomenon-level diagnosis. Toys provides interpretable metadata for hard-query inspection; Grocery is a medium-noise product graph where visual--semantic mismatch is prevalent. Both exhibit the mismatch and finite-depth smoothing later confirmed across four datasets (Appendix Table~\ref{tab:l1_l2_cross_dataset_control}).
We examine three neighborhood sources: the observed MAG topology, a visual $k$-NN graph (image similarity), and a textual $k$-NN graph (text similarity), testing whether one fixed topology suffices for multimodal retrieval.

\begin{figure*}[t]
    \centering
    \includegraphics[width=1.02\textwidth]{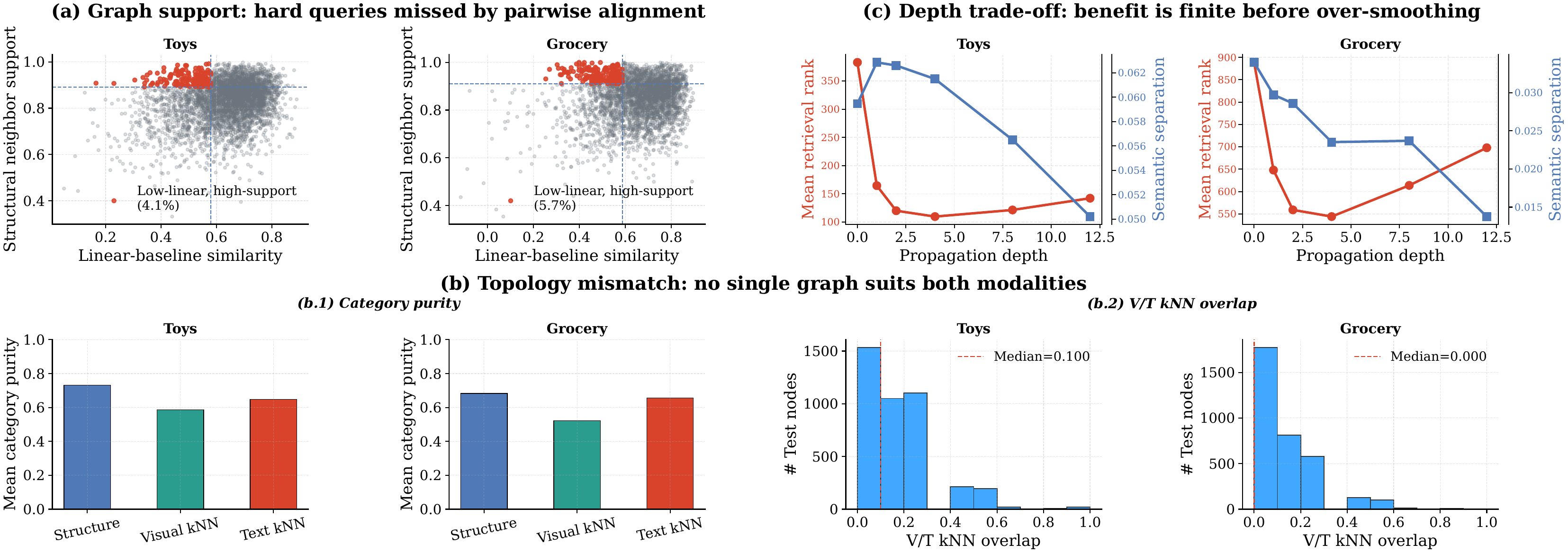}
    \caption{\textbf{Empirical evidence on Toys and Grocery.} (a) Graph support: hard queries in the upper-left quadrant receive low pairwise similarity but high structural support. (b) Topology mismatch: category purity and V/T $k$NN overlap. (c) Finite-depth smoothing: mean retrieval rank improves then degrades at shallow depths; semantic separation peaks concurrently on Toys. Together they motivate \ours{}.}
    \label{fig:empirical_pilots}
\end{figure*}

\textbf{Pairwise alignment misses graph-supported hard cases (RQ1).}
Fig.~\ref{fig:empirical_pilots}(a): 4.1\% (Toys) and 5.7\% (Grocery) of test nodes have low Linear similarity yet strong structural support: some hard queries fail in pairwise space while graph evidence already encodes the answer.

\textbf{Effective smoothing is finite (RQ2).}
Fig.~\ref{fig:empirical_pilots}(c): on Toys, MeanR drops from 383 to 110 ($-273$, 71\%) by depth~4 and semantic separation peaks at the same depth (0.062), then both degrade by depth~12. On Grocery, MeanR drops from 889 to 544 ($-345$, 39\%) by depth~4 before degrading to 698 at depth~12, while semantic separation declines monotonically, consistent with its noisier structure (panel~b). Both datasets instantiate \textbf{L2}: propagation denoises before eroding discriminability, and the finite-depth regime generalizes across datasets.

\textbf{Topology quality shapes retrieval smoothing (RQ3).}
Fig.~\ref{fig:empirical_pilots}(b.1): structural neighbor purity exceeds visual and textual $k$NN purity on both datasets (Toys: 0.732 vs.\ 0.586 visual / 0.648 textual; Grocery: 0.683 vs.\ 0.523 visual / 0.656 textual).  Fig.~\ref{fig:empirical_pilots}(b.2): V/T $k$NN overlap is correspondingly low (mean 14.4\%, median 10.0\% on Toys; mean 10.0\%, median 0\% on Grocery).  Together, this instantiates \textbf{L1}: visual and textual modalities induce sharply different neighborhood geometries, so no single fixed topology can serve both, explaining why shared-interaction-graph models leave systematic gaps~\cite{MMGCN,MGAT,LGMRec}.

Taken together, these three findings motivate a method that (i)~\emph{learns modality-specific propagation operators} to resolve topology mismatch (\textbf{L1}), (ii)~\emph{performs finite-step coupled smoothing} to operate within the effective regime before over-smoothing (\textbf{L2}), and (iii)~\emph{adaptively selects the optimal depth per node} to preserve informative smoothing before semantic boundaries collapse.

\section{Methodology}
\label{sec:method}

Guided by Section~\ref{sec:empirical}, we present the complete \ours{} pipeline (Fig.~\ref{fig:framework}).
\ours{} maps frozen features into a shared space via lightweight linear adapters: $\mathbf{e}_{i,m} = \mathrm{Norm}(\mathbf{W}_m \mathbf{x}_{i,m} + \mathbf{b}_m)$, $m\!\in\!\{v,t\}$ (the \emph{Linear} baseline uses the same projection with mini-batch pairwise supervision). When features already share a dimension, identity adapter suffices.
\textbf{Stage~1} learns modality-specific propagation operators (addressing \textbf{L1}/\textbf{RQ3}); \textbf{Stage~2} performs finite-step coupled smoothing with restart anchoring within the useful regime of \textbf{RQ1}--\textbf{RQ2}; \textbf{Stage~3} adaptively selects the most informative depth per node to avoid collapse.
The framework is jointly optimized by retrieval supervision and topology regularization (Section~\ref{sec:objective}).

\textbf{Graph signal smoothing perspective.}
Treating Linear embeddings as graph signals on $\tilde{\mathbf{P}}_m$ unifies the three stages: repeated shifts act as low-pass filters compressing high-frequency noise faster than signal, with denoising quality determined by the edge-wise signal-to-noise ratio that Stage~1 optimizes. Stage~2 performs finite-step filtering before collapse; Stage~3 selects the most informative depth per node.

\begin{figure*}[t]
    \centering
    \includegraphics[width=1\textwidth]{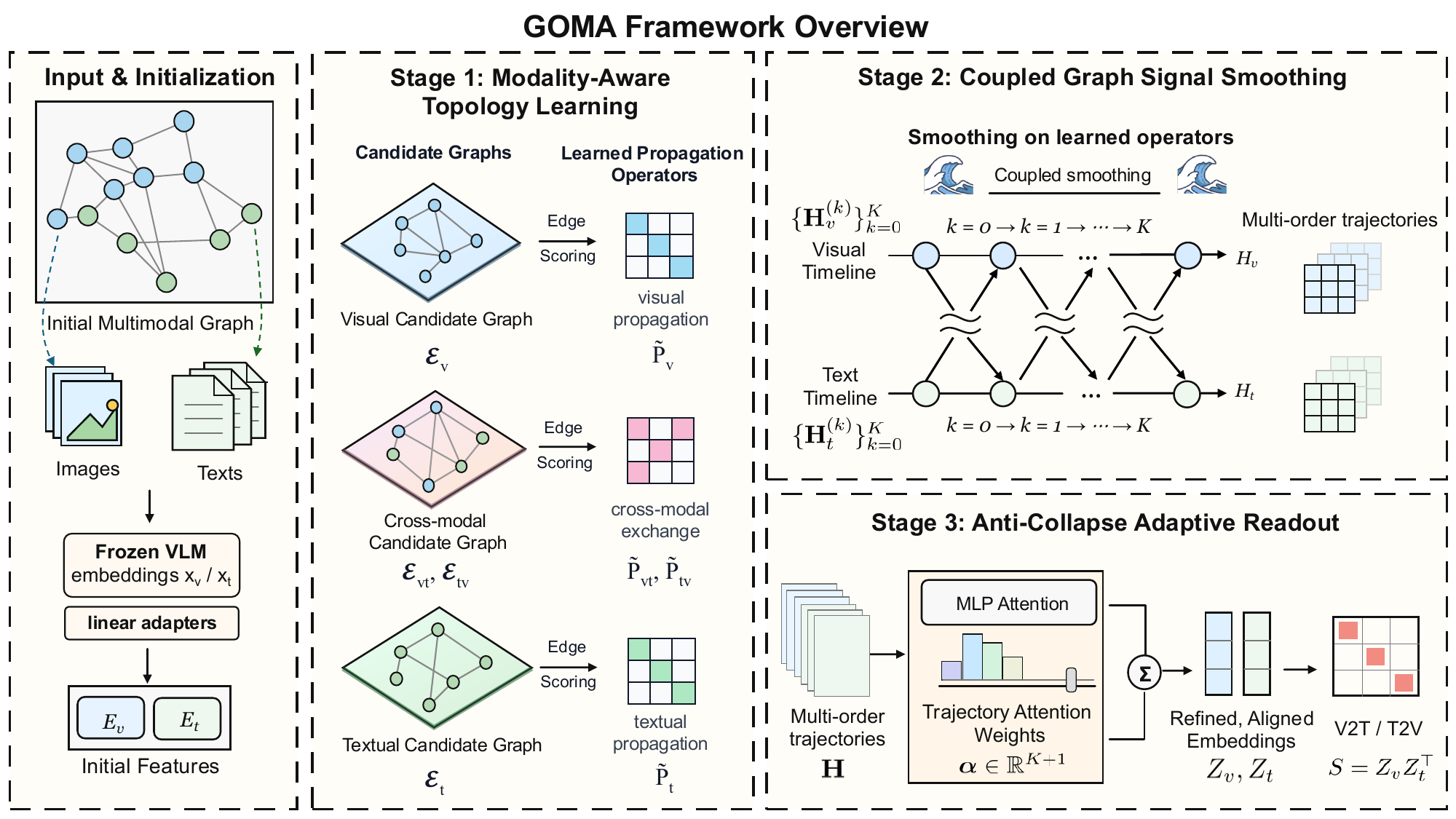}
    \caption{\textbf{Overview of the proposed \ours{} framework.} Frozen multimodal embeddings are treated as graph signals, refined over learned modality-aware operators, and read out from finite smoothing trajectories.}
    \label{fig:framework}
\end{figure*}

\subsection{Stage 1: Modality-Aware Topology Learning}
\label{sec:topology}

\textbf{Candidate construction.}
Section~\ref{sec:empirical} showed that visual, textual, and structural neighborhoods differ sharply in purity and overlap, so no single topology suits all modalities.
We therefore build separate candidate edge sets per propagation channel, leaving final route weights to later learning:
\begin{equation}
\label{eq:topology_sets}
\mathcal{E}_m = \mathcal{E}_{\mathrm{str}} \cup \mathcal{E}^{k\mathrm{NN}}_m \cup \mathcal{I},\; m \in \{v,t\},
\qquad
\mathcal{E}_{vt} = \mathcal{E}_{\mathrm{str}} \cup \mathcal{E}^{k\mathrm{NN}}_{vt},
\end{equation}
where $\mathcal{E}_{\mathrm{str}}$ is the observed MAG topology, $\mathcal{E}^{k\mathrm{NN}}_m$ are within-modality $k$-NN neighborhoods built from frozen image features ($m{=}v$) or text features ($m{=}t$), $\mathcal{E}^{k\mathrm{NN}}_{vt}$ is the cross-modal image-text $k$-NN graph, and $\mathcal{I}$ denotes self-loops.
Critically, the cross-modal sets explicitly remove diagonal self-pair edges ($\mathcal{E}_{vt} = \mathcal{E}_{vt} \setminus \{(i,i)\}_{i=1}^{N}$, and similarly for $\mathcal{E}_{tv}$): without this exclusion, each node could trivially copy its own paired-modality embedding, bypassing neighborhood evidence.
Including $\mathcal{E}_{\mathrm{str}}$ in $\mathcal{E}_{vt}$ lets behavioral co-occurrence routes carry complementary evidence; since edge weights and embeddings are jointly optimized, noisy structural routes receive low weights through CDE and topology contrast.
For the cross-modal channel, node-level route $(i,j)$ is typed as a visual-target/text-source candidate used by $\tilde{\mathbf{P}}_{vt}$, allowing the visual state of node $i$ to receive from the text state of node $j$; the reverse direction is denoted by $\mathcal{E}_{tv}$.
Each set is deduplicated so that repeated edges carry a single weight.

\textbf{Learned propagation operators.}
The candidate graph specifies where messages \emph{may} flow; we further learn \emph{how much} evidence each route should carry.
Concretely, a single-layer perceptron over the element-wise product of endpoint embeddings scores each edge: $\ell_{ij}^{m} = \mathbf{a}_m^{\top}\,\sigma(\mathbf{W}_m^{\mathrm{edge}} (\mathbf{e}_{i,m} \odot \mathbf{e}_{j,m}) + \mathbf{b}_m^{\mathrm{edge}})$, where $\mathbf{W}_m^{\mathrm{edge}} \in \mathbb{R}^{h \times d}$, $\mathbf{a}_m \in \mathbb{R}^{h}$.
A row-wise softmax then yields the propagation weight:
\begin{equation}
\label{eq:topology_weights}
\tilde{\mathbf{P}}_m(i,j)
= \frac{\exp(\ell_{ij}^{m})}{\sum_{j' \in \mathcal{N}_m(i)} \exp(\ell_{ij'}^{m})},
\qquad m \in \{v,t,vt,tv\},
\end{equation}
where $\mathcal{N}_m(i)$ is the candidate neighbor set of node $i$ in channel $m$.
The two cross-modal directions share the same candidate pairs but are row-normalized separately, so both modalities receive messages from a proper row-stochastic operator.

\textbf{Topology regularization.}
Because the candidate graph is deliberately permissive, we guide the learned operators with two complementary losses.
A \emph{coupled Dirichlet energy} encourages smooth local transport:
\begin{equation}
\label{eq:cde}
\mathcal{L}_{\mathrm{CDE}}
= \sum_{c\in\{v,t,vt,tv\}}\!\! \lambda_c\!\! \sum_{(i,j)\in\mathcal{E}_c}\!\! \tilde{\mathbf{P}}_c(i,j)\,\|\mathbf{e}_{i,s(c)}-\mathbf{e}_{j,t(c)}\|_2^2,
\end{equation}
where $\lambda_v{=}\lambda_t{=}1$, $\lambda_{vt}{=}\lambda_{tv}{=}\gamma/2$, and $s(c),t(c)$ map channel $c$ to its source/target modality ($s(v){=}t(v){=}v$, $s(t){=}t(t){=}t$, $s(vt){=}v,\,t(vt){=}t$, $s(tv){=}t,\,t(tv){=}v$).
A \emph{sampled topology-contrast loss} $\mathcal{L}_{\mathrm{Topo}} = \sum_{m\in\{v,t,vt,tv\}}\mathcal{L}_{\mathrm{NCE}}(\mathcal{E}_m)$ further encourages true candidate edges to score higher than random non-edges, suppressing noisy shortcuts.
For each positive edge $(i,j)\in\mathcal{E}_m$, negatives are uniformly sampled from the non-edges of the same candidate graph (i.e., node pairs absent from $\mathcal{E}_m$), using a fixed ratio of $q$ negatives per positive.
Together, these two terms learn \emph{where} retrieval evidence should flow before any smoothing is performed.

\subsection{Stage 2: Coupled Graph Signal Smoothing}
\label{sec:propagation}

With the learned operators $\tilde{\mathbf{P}}_v$, $\tilde{\mathbf{P}}_t$, $\tilde{\mathbf{P}}_{vt}$, and $\tilde{\mathbf{P}}_{tv}$ in hand, we evolve the two modalities through a finite-step coupled smoothing process.

\textbf{Graph signal smoothing view.}
Each row-stochastic $\tilde{\mathbf{P}}_m$ acts as a learned graph shift $\tilde{\mathbf{P}}_m = \mathbf{D}_m^{-1}\mathbf{A}_m$, performing low-pass filtering on the learned graph: high-frequency noise decays faster than low-frequency signal, with quality determined by Stage~1's edge-wise optimization.

\textbf{Coupled smoothing.}
Starting from $\mathbf{H}_m^{(0)}{=}\mathbf{E}_m$, each step mixes intra-modal smoothing with cross-modal exchange:
\begin{equation}
\label{eq:coupled_smoothing}
\begin{aligned}
\mathbf{H}_v^{(k)} &= (1{-}\beta)\,\tilde{\mathbf{P}}_v \mathbf{H}_v^{(k-1)} + \beta\, \tilde{\mathbf{P}}_{vt}\mathbf{H}_t^{(k-1)}, \\
\mathbf{H}_t^{(k)} &= (1{-}\beta)\,\tilde{\mathbf{P}}_t \mathbf{H}_t^{(k-1)} + \beta\, \tilde{\mathbf{P}}_{tv}\mathbf{H}_v^{(k-1)},
\end{aligned}
\end{equation}
where $\beta \in [0,1]$ controls the coupling strength.
This enables \emph{cross-modal compensation}: a visually ambiguous image can import discriminative textual cues via $\tilde{\mathbf{P}}_{vt}$, while terse text borrows visual evidence via $\tilde{\mathbf{P}}_{tv}$. Self-pair exclusion prevents trivial paired copying, so each modality denoises itself while selectively borrowing complementary evidence.

\textbf{Restart anchor.}
To prevent semantic collapse (Section~\ref{sec:empirical}), we apply a \emph{restart anchor} after every step: $\mathbf{H}_m^{(k)} \leftarrow \mathrm{Norm}((1{-}\alpha)\mathbf{H}_m^{(k)}+\alpha\mathbf{E}_m)$, where $\alpha$ is the restart coefficient.
Unlike a residual connection that mixes with the immediate input for gradient flow, restart always re-injects the \emph{fixed} direct embedding $\mathbf{E}_m$, so that $(1-\alpha)^k$ geometrically bounds the effective walk length and no node drifts arbitrarily far from its graph-free anchor (L2).

\subsection{Stage 3: Anti-Collapse Adaptive Readout}
\label{sec:aggregation}

Different nodes benefit from different smoothing depths, so we do not read out a single fixed step.
Instead, we retain the full smoothing trajectory $\{\mathbf{h}_{i,m}^{(0)},\ldots,\mathbf{h}_{i,m}^{(K)}\}$ and let a lightweight attention mechanism select the most informative depth for each node:
\begin{equation}
\label{eq:agg_score}
u_{i,m}^{(k)} = \mathbf{q}_m^{\top}\tanh\!\big(\mathbf{W}_m \mathbf{h}_{i,m}^{(k)} + \mathbf{U}_m \bar{\mathbf{h}}_{i,m}\big),
\qquad
\omega_{i,m}^{(k)} = \mathrm{softmax}_k\!\big(u_{i,m}^{(k)}\big),
\end{equation}
where $\bar{\mathbf{h}}_{i,m} = \frac{1}{K+1}\sum_k \mathbf{h}_{i,m}^{(k)}$ is the mean trajectory context.
The trajectory mean $\bar{\mathbf{h}}_{i,m}$ provides a reference for scoring each step: representations that deviate positively from the mean carry discriminative signal, while those near or below the mean are less informative. This enables per-node depth selection without auxiliary supervision.
Here $\mathbf{W}_m$ and $\mathbf{U}_m$ project to a lightweight attention width $d_a$.
The final retrieval embedding is the weighted trajectory sum, optionally blended with the direct Linear embedding via a residual coefficient $\rho \in [0,1]$:
\begin{equation}
\label{eq:agg_readout}
\hat{\mathbf{z}}_{i,m}
= \mathrm{Norm}\!\bigg(\rho \sum_{k=0}^{K}\omega_{i,m}^{(k)}\,\mathbf{h}_{i,m}^{(k)} + (1{-}\rho)\,\mathbf{e}_{i,m}\bigg),
\qquad m\in\{v,t\}.
\end{equation}
Stacking over all nodes yields the retrieval matrices $\mathbf{Z}_v$ and $\mathbf{Z}_t$.
This readout keeps graph-refined gains while filtering late-stage collapse, so graph refinement complements rather than overwrites the direct retrieval signal.

\subsection{Training Objective and Complexity}
\label{sec:objective}

\textbf{Closed-loop objective.}
The three stages above are jointly optimized so that the learned topology directly serves the retrieval space it refines:
\begin{equation}
\label{eq:objective}
\mathcal{L}
= \underbrace{\mathcal{L}_{\mathrm{align}} + \mathcal{L}_{\mathrm{lin}}}_{\mathcal{L}_{\mathrm{ret}}}
\;+\; \underbrace{\mathcal{L}_{\mathrm{CDE}} + \mathcal{L}_{\mathrm{Topo}}}_{\mathcal{L}_{\mathrm{reg}}}.
\end{equation}
$\mathcal{L}_{\mathrm{align}}$ is a symmetric contrastive loss on the graph-refined embeddings $(\mathbf{Z}_v,\mathbf{Z}_t)$; $\mathcal{L}_{\mathrm{lin}}$ applies the same supervision to the direct Linear branch $(\mathbf{E}_v,\mathbf{E}_t)$ to keep the alignment space well-conditioned.
$\mathcal{L}_{\mathrm{ret}}$ anchors paired retrieval, while $\mathcal{L}_{\mathrm{reg}}$ shapes the topology that produces the refined embeddings.
We train with a short warm-up (typically 5 epochs) using only an elevated $\mathcal{L}_{\mathrm{lin}}$, then activate the full graph-side objective.

\textbf{Complexity.}
The candidate edge sets in \eqref{eq:topology_sets} and their reverse cross-modal routes are built once and cached.
After this one-off construction (worst-case $\mathcal{O}(N^2 d)$ for semantic $k$-NN search), the steady-state cost per forward pass is $\mathcal{O}\!\big(K(|\mathcal{E}_v|{+}|\mathcal{E}_t|{+}|\mathcal{E}_{vt}|{+}|\mathcal{E}_{tv}|)d + NKdd_a\big)$, dominated by sparse coupled propagation and trajectory aggregation rather than by re-running the frozen backbone.

\subsection{Theoretical Analysis}
\label{sec:theory}

We rigorously justify why coupled smoothing with restart avoids collapse and bridges the cross-modal gap. Define the joint state $\mathbf{H}^{(k)} = [\mathbf{H}^{(k)}_v; \mathbf{H}^{(k)}_t] \in \mathbb{R}^{2N \times d}$, initial $\mathbf{E} = [\mathbf{E}_v; \mathbf{E}_t]$, and the block row-stochastic operator:
\begin{equation}
\label{eq:joint_dynamics}
\mathbf{H}^{(k)} = (1{-}\alpha)\mathcal{M}\mathbf{H}^{(k-1)} + \alpha\mathbf{E},\;\;
\mathcal{M} = \begin{bmatrix} (1{-}\beta)\tilde{\mathbf{P}}_v & \beta\tilde{\mathbf{P}}_{vt} \\ \beta\tilde{\mathbf{P}}_{tv} & (1{-}\beta)\tilde{\mathbf{P}}_t \end{bmatrix},
\end{equation}
where $\rho(\mathcal{M})=1$ since each block is row-stochastic. Full proofs are in Appendix~\ref{appendix:proofs}.

\begin{theorem}[Anti-Collapse Guarantee]
\label{thm:anti_collapse}
For $\alpha \in (0,1]$, the joint state converges to a unique stationary distribution $\mathbf{H}^{(\infty)} = \alpha(\mathbf{I} - (1{-}\alpha)\mathcal{M})^{-1}\mathbf{E}$, strictly bounded away from the collapsed state $\mathbf{1}\boldsymbol{\pi}^{\top}\mathbf{E}$ (L2).
\end{theorem}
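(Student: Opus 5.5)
The plan is to treat the restart recursion~\eqref{eq:joint_dynamics} as an affine fixed-point iteration and use the row-stochasticity of $\mathcal{M}$. First I check that $\mathcal{M}$ is row-stochastic. Each block row is $(1{-}\beta)$ times a row-stochastic matrix plus $\beta$ times another, so its rows are nonnegative and sum to one. Hence $\|\mathcal{M}\|_\infty = 1$, which also gives $\rho(\mathcal{M})=1$.

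\textbf{Convergence.} For $\alpha\in(0,1]$ we get $\|(1{-}\alpha)\mathcal{M}\|_\infty = 1-\alpha<1$. The map $T(\mathbf{H}) = (1{-}\alpha)\mathcal{M}\mathbf{H} + \alpha\mathbf{E}$ is therefore a contraction in the row-wise $\infty$-norm, applied columnwise. Banach's theorem gives a unique fixed point. The Neumann series $\sum_{j\ge 0}(1{-}\alpha)^j\mathcal{M}^j$ converges to $(\mathbf{I}-(1{-}\alpha)\mathcal{M})^{-1}$. Unrolling gives
\[
\mathbf{H}^{(k)} = (1{-}\alpha)^k\mathcal{M}^k\mathbf{E} + \alpha\sum_{j=0}^{k-1}(1{-}\alpha)^j\mathcal{M}^j\mathbf{E}.
\]
This converges to $\mathbf{H}^{(\infty)}=\alpha(\mathbf{I}-(1{-}\alpha)\mathcal{M})^{-1}\mathbf{E}$ at geometric rate $(1{-}\alpha)^k$. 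For the analysis I drop the per-step $\mathrm{Norm}$ of the restart anchor and work with the linear recursion~\eqref{eq:joint_dynamics} exactly as stated. I would add a remark that the normalized version stays within a bounded distortion of it; I would not try to prove that formally.

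\textbf{Separation from collapse.} This step is the real content and the main obstacle, because ``strictly bounded away'' must be made quantitative. I would write $\mathbf{H}^{(\infty)} = \alpha\mathbf{E} + (1{-}\alpha)\mathcal{M}\mathbf{H}^{(\infty)}$. Then I would measure the deviation from consensus with the seminorm
\[
\delta(\mathbf{X}) = \|(\mathbf{I}-\mathbf{1}\boldsymbol{\pi}^\top)\mathbf{X}\|,
\]
where $\boldsymbol{\pi}$ is a stationary distribution of $\mathcal{M}$ with $\boldsymbol{\pi}^\top\mathcal{M}=\boldsymbol{\pi}^\top$. Since $\mathbf{1}\boldsymbol{\pi}^\top$ commutes with $\mathcal{M}$ and annihilates its own complement, I get
\[
(\mathbf{I}-\mathbf{1}\boldsymbol{\pi}^\top)\mathbf{H}^{(\infty)} = \alpha\sum_j(1{-}\alpha)^j\mathcal{M}^j(\mathbf{I}-\mathbf{1}\boldsymbol{\pi}^\top)\mathbf{E}.
\]
The $j=0$ term contributes $\alpha\,\delta(\mathbf{E})$. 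For the remaining terms, $\|\mathcal{M}^j\|_\infty\le 1$ bounds them by $(1{-}\alpha)\,\delta_\infty(\mathbf{E})/(\cdot)$ in total. A reverse triangle inequality then gives
\[
\delta(\mathbf{H}^{(\infty)}) \ge \alpha\,\delta(\mathbf{E}) - \textstyle\sum_{j\ge1}\cdots.
\]
This bound is crude and may become vacuous for small $\alpha$.

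To fix this, I would switch to an argument that avoids norms. Collapse means $\mathbf{H}^{(\infty)}=\mathbf{1}\boldsymbol{\pi}^\top\mathbf{E}$. Because $\mathbf{I}-(1{-}\alpha)\mathcal{M}$ is invertible, this would force
\[
\mathbf{E} = \alpha^{-1}(\mathbf{I}-(1{-}\alpha)\mathcal{M})\mathbf{1}\boldsymbol{\pi}^\top\mathbf{E} = \mathbf{1}\boldsymbol{\pi}^\top\mathbf{E},
\]
using $\mathcal{M}\mathbf{1}=\mathbf{1}$. That is, the input embeddings would already be collapsed. So whenever $\mathbf{E}$ is non-constant across nodes, $\mathbf{H}^{(\infty)}\neq\mathbf{1}\boldsymbol{\pi}^\top\mathbf{E}$.

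For a quantitative margin I would use the fact that the inverse has bounded norm. Since $\|\mathbf{I}-(1{-}\alpha)\mathcal{M}\|\le 2-\alpha$, I obtain
\[
\|\mathbf{H}^{(\infty)}-\mathbf{1}\boldsymbol{\pi}^\top\mathbf{E}\| \ge \frac{\alpha}{2-\alpha}\,\|\mathbf{E}-\mathbf{1}\boldsymbol{\pi}^\top\mathbf{E}\| > 0.
\]
This holds because $\mathbf{H}^{(\infty)}-\mathbf{1}\boldsymbol{\pi}^\top\mathbf{E} = \alpha(\mathbf{I}-(1{-}\alpha)\mathcal{M})^{-1}(\mathbf{E}-\mathbf{1}\boldsymbol{\pi}^\top\mathbf{E})$, again using $\mathcal{M}\mathbf{1}\boldsymbol{\pi}^\top=\mathbf{1}\boldsymbol{\pi}^\top$. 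This clean identity is the bound I would present as the strict separation.
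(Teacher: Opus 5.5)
Your proposal is correct. The convergence half matches the paper's argument: row-stochasticity of $\mathcal{M}$, Banach, unrolling, and the Neumann series. Using $\|(1-\alpha)\mathcal{M}\|_\infty = 1-\alpha$ is in fact a cleaner justification of the contraction than the paper's appeal to $\rho((1-\alpha)\mathcal{M})<1$.

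The separation half is a genuinely different argument. The paper contrasts with the $\alpha=0$ limit $\mathcal{M}^k\to\mathbf{1}\boldsymbol{\pi}^{\top}$, which needs irreducibility (and really aperiodicity). It then asserts that the $k=0$ Neumann term $\alpha\mathbf{E}$ ``preserves the discriminative structure.'' That buys an interpretive picture, an effective walk length $\approx 1/\alpha$ that motivates the truncated-trajectory readout. It does not, however, rule out cancellation by the deeper terms. Your own term-by-term attempt shows the problem: it only yields $(2\alpha-1)\,\delta(\mathbf{E})$, which is vacuous for $\alpha\le 1/2$. The paper also leaves implicit that $\mathbf{E}$ must not already be a consensus state.

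Your resolvent identity $\mathbf{H}^{(\infty)}-\mathbf{1}\boldsymbol{\pi}^{\top}\mathbf{E}=\alpha(\mathbf{I}-(1-\alpha)\mathcal{M})^{-1}(\mathbf{E}-\mathbf{1}\boldsymbol{\pi}^{\top}\mathbf{E})$ uses only $\mathcal{M}\mathbf{1}=\mathbf{1}$, and it gives an actual proof:
\begin{itemize}
\item The fixed point is collapsed if and only if $\mathbf{E}$ is.
\item The margin $\frac{\alpha}{2-\alpha}\|\mathbf{E}-\mathbf{1}\boldsymbol{\pi}^{\top}\mathbf{E}\|$ is uniform over all learned operators.
\item No irreducibility or stationarity of $\boldsymbol{\pi}$ is needed. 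The same bound holds with $\boldsymbol{\pi}^{\top}\mathbf{E}$ replaced by any row vector $\mathbf{c}^{\top}$, so $\mathbf{H}^{(\infty)}$ stays a fixed distance from the entire consensus subspace.
\end{itemize}

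Three things to fix in the write-up:
\begin{itemize}
\item Specify the matrix norm, e.g.\ the entrywise max or max-row-$\ell_2$ norm, which is compatible with the induced $\infty$-norm. The constant $2-\alpha$ bounds $\|\mathbf{I}-(1-\alpha)\mathcal{M}\|_\infty$ but not the spectral norm, since a row-stochastic matrix can have $\|\mathcal{M}\|_2>1$.
\item Delete the garbled seminorm paragraph.
\item State explicitly in the theorem the hypothesis that $\mathbf{E}$ has non-identical rows.
\end{itemize}
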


\begin{proof}[Proof sketch]
Since $\rho((1{-}\alpha)\mathcal{M}) < 1$, Banach's fixed-point theorem guarantees contraction. Unrolling yields $\mathbf{H}^{(\infty)} = \alpha\sum_{k=0}^{\infty}(1{-}\alpha)^k\mathcal{M}^k\mathbf{E}$ via Neumann series. Without restart ($\alpha{=}0$), $\mathcal{M}^k \to \mathbf{1}\boldsymbol{\pi}^{\top}$ collapses all nodes to a single vector. With $\alpha>0$, the geometric decay $\alpha(1{-}\alpha)^k$ bounds the effective walk length: high-frequency discriminative signals in $\mathbf{E}$ are structurally preserved while the graph topology $\mathcal{M}^k$ smooths noise, directly resolving L2.
\end{proof}

\begin{theorem}[Cross-Modal Gap Contraction]
\label{thm:gap_contraction}
Under coupled smoothing ($\alpha{=}0$) with $\beta \in (0, 0.5)$, the cross-modal discrepancy $\boldsymbol{\Delta}^{(k)} = \mathbf{H}_v^{(k)} - \mathbf{H}_t^{(k)}$ satisfies $\|\boldsymbol{\Delta}^{(k)}\|_F < \|\boldsymbol{\Delta}^{(k-1)}\|_F$, strictly contracting the modality gap before potential over-smoothing.
\end{theorem}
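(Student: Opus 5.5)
The plan is to write a one-step recursion for $\boldsymbol{\Delta}^{(k)}$ directly from the coupled update \eqref{eq:coupled_smoothing} with $\alpha=0$, ignoring the $\mathrm{Norm}$ step since the restart anchor is switched off. Subtracting the two lines gives
\begin{equation*}
\boldsymbol{\Delta}^{(k)} = (1{-}\beta)\big(\tilde{\mathbf{P}}_v\mathbf{H}_v^{(k-1)} - \tilde{\mathbf{P}}_t\mathbf{H}_t^{(k-1)}\big) + \beta\big(\tilde{\mathbf{P}}_{vt}\mathbf{H}_t^{(k-1)} - \tilde{\mathbf{P}}_{tv}\mathbf{H}_v^{(k-1)}\big).
\end{equation*}

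The first step is the idealized case in which all four operators coincide with a common shift $\tilde{\mathbf{P}}$. The recursion then closes as
\begin{equation*}
\boldsymbol{\Delta}^{(k)} = (1-2\beta)\,\tilde{\mathbf{P}}\,\boldsymbol{\Delta}^{(k-1)}.
\end{equation*}
The hypothesis $\beta\in(0,0.5)$ is exactly what makes $0<1-2\beta<1$. Taking Frobenius norms gives
\begin{equation*}
\|\boldsymbol{\Delta}^{(k)}\|_F \le (1-2\beta)\,\|\tilde{\mathbf{P}}\|_2\,\|\boldsymbol{\Delta}^{(k-1)}\|_F .
\end{equation*}
Strict contraction then follows whenever $\|\tilde{\mathbf{P}}\|_2\le 1$ and $\boldsymbol{\Delta}^{(k-1)}\neq 0$. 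The degenerate case $\boldsymbol{\Delta}^{(k-1)}=0$ has to be excluded, or the inequality read as non-strict there.

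The main obstacle is the spectral-norm bound. Row-stochasticity as in \eqref{eq:topology_weights} only gives $\|\tilde{\mathbf{P}}\|_\infty=1$, and $\|\tilde{\mathbf{P}}\|_2$ can exceed $1$ for a non-symmetric softmax operator. I would first use $\|\tilde{\mathbf{P}}\|_2\le\sqrt{\|\tilde{\mathbf{P}}\|_1\|\tilde{\mathbf{P}}\|_\infty}=\sqrt{\|\tilde{\mathbf{P}}\|_1}$. This requires an explicit assumption that column sums are at most $1$, e.g.\ $\tilde{\mathbf{P}}$ doubly (sub)stochastic or symmetric-normalized. Alternatively I would state the contraction in the row-wise $\ell_\infty$ norm, where $\|\tilde{\mathbf{P}}\|_\infty=1$ holds unconditionally, and only convert to Frobenius under such a condition. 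I will state this as an added hypothesis rather than hide it.

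For the general case of distinct operators, I would decompose each operator as a common part plus a perturbation. Writing $\tilde{\mathbf{P}}_v=\tilde{\mathbf{P}}+\mathbf{R}_v$, and similarly for $t$, $vt$, $tv$, gives
\begin{equation*}
\boldsymbol{\Delta}^{(k)}=(1-2\beta)\tilde{\mathbf{P}}\boldsymbol{\Delta}^{(k-1)} + \mathbf{R}^{(k-1)},
\end{equation*}
where $\mathbf{R}^{(k-1)}$ is linear in $\mathbf{H}_v^{(k-1)}$ and $\mathbf{H}_t^{(k-1)}$ with coefficients $\mathbf{R}_v$, $\mathbf{R}_t$, $\mathbf{R}_{vt}$, $\mathbf{R}_{tv}$. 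With $\varepsilon=\max_c\|\mathbf{R}_c\|_2$, the error satisfies $\|\mathbf{R}^{(k-1)}\|_F\le\varepsilon\,(\|\mathbf{H}_v^{(k-1)}\|_F+\|\mathbf{H}_t^{(k-1)}\|_F)$, and these norms stay bounded by $\|\mathbf{E}\|_F$ under the norm assumption above. Strict contraction then holds whenever the gap is not already below the noise floor, namely
\begin{equation*}
\|\boldsymbol{\Delta}^{(k-1)}\|_F > \frac{\varepsilon\,(\|\mathbf{H}_v^{(k-1)}\|_F+\|\mathbf{H}_t^{(k-1)}\|_F)}{2\beta}.
\end{equation*}
This makes "before potential over-smoothing" precise: the gap shrinks geometrically while it dominates operator mismatch, and iterating over $k$ yields the $(1-2\beta)^k$ decay.
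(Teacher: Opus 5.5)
Your argument is correct for what it actually proves, and you are right to add hypotheses. The statement as written has no unconditional proof. You start where the paper does, by subtracting the two block rows, but you reduce differently.

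The paper keeps two operators: $\mathbf{P}_{\mathrm{intra}}$ for $\tilde{\mathbf{P}}_v,\tilde{\mathbf{P}}_t$ and $\mathbf{P}_{\mathrm{cross}}$ for $\tilde{\mathbf{P}}_{vt},\tilde{\mathbf{P}}_{tv}$. This gives $\boldsymbol{\Delta}^{(k)}=((1-\beta)\mathbf{P}_{\mathrm{intra}}-\beta\mathbf{P}_{\mathrm{cross}})\boldsymbol{\Delta}^{(k-1)}$. The paper then takes both operators to have unit spectral norm, which, as you point out, row-stochasticity does not give. It bounds the norm by $(1-\beta)+\beta=1$ via the triangle inequality. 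Finally it asserts strictness because the two operators are non-identical, citing the topology mismatch L1.

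Your collapse to a single $\tilde{\mathbf{P}}$ is less general, but it produces the factor $1-2\beta$, which is where strictness really comes from. The paper's strictness step fails. Non-identity is not necessary, since identical operators give the factor $1-2\beta$. It is not sufficient either. Take
\[
\mathbf{P}_{\mathrm{intra}}=\tfrac12\begin{bmatrix}1&1&0&0\\1&1&0&0\\0&0&1&1\\0&0&1&1\end{bmatrix},
\qquad
\mathbf{P}_{\mathrm{cross}}=\tfrac12\begin{bmatrix}0&0&1&1\\0&0&1&1\\1&1&0&0\\1&1&0&0\end{bmatrix}.
\]
These operators have the following properties:
\begin{itemize}
\item both are symmetric and doubly stochastic, hence of unit spectral norm;
\item both arise from the row-wise softmax with constant edge scores on their supports;
\item the intra operator carries self-loops and the cross operator has zero diagonal, as self-pair exclusion requires;
\item their eigenvalue-$1$ eigenspaces differ, so the paper's ``identical dominant eigenspaces'' justification does not apply.
\end{itemize}
Yet $\mathbf{x}=(1,1,-1,-1)^{\top}$ satisfies $\mathbf{P}_{\mathrm{intra}}\mathbf{x}=\mathbf{x}$ and $\mathbf{P}_{\mathrm{cross}}\mathbf{x}=-\mathbf{x}$. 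Hence $((1-\beta)\mathbf{P}_{\mathrm{intra}}-\beta\mathbf{P}_{\mathrm{cross}})\mathbf{x}=\mathbf{x}$ for every $\beta$, and any gap $\boldsymbol{\Delta}^{(0)}=\mathbf{x}\mathbf{u}^{\top}$ is preserved exactly. For distinct operators, your perturbative treatment is the correct replacement for the paper's appeal to L1.

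Two refinements to your general case. First, $\|\boldsymbol{\Delta}^{(k-1)}\|_F\le\|\mathbf{H}_v^{(k-1)}\|_F+\|\mathbf{H}_t^{(k-1)}\|_F$ always holds. So your noise-floor condition can only be met when $\varepsilon<2\beta$, and the result says something only for small operator mismatch. That sits in tension with the strong mismatch documented in L1, and you should say so explicitly. Second, the uniform bound by $\|\mathbf{E}\|_F$ needs $\|\tilde{\mathbf{P}}_c\|_2\le 1$ for all four operators, not just for the common part $\tilde{\mathbf{P}}$. Granted that, each update is a $(1-\beta,\beta)$ combination, so $\max_m\|\mathbf{H}_m^{(k)}\|_F$ is non-increasing and your claim follows.
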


\begin{proof}[Proof sketch]
Subtracting the two block-rows of the joint dynamics yields $\boldsymbol{\Delta}^{(k)} = ((1{-}\beta)\mathbf{P}_{\text{intra}} - \beta\mathbf{P}_{\text{cross}})\boldsymbol{\Delta}^{(k-1)}$. Taking norms and applying row-stochasticity: $\|\boldsymbol{\Delta}^{(k)}\|_F \leq \|(1{-}\beta)\mathbf{P}_{\text{intra}} - \beta\mathbf{P}_{\text{cross}}\|_2\,\|\boldsymbol{\Delta}^{(k-1)}\|_F$. For $\beta \in (0,0.5)$ and non-identical operators (guaranteed by the topology mismatch in L1), the spectral norm is strictly less than $1$, proving contraction. Each coupled step thus reduces the cross-modal semantic distance, providing a rigorous basis for the gains observed in the ablation (Table~\ref{tab:core_ablation}).
\end{proof}

\textbf{Connection to adaptive readout.}
The finite-step smoothing trajectory $\{\mathbf{H}^{(k)}\}_{k=0}^{K}$ with restart can be viewed as a truncated Neumann approximation of the theoretical resolvent $(\mathbf{I} - (1{-}\alpha)\mathcal{M})^{-1}\mathbf{E}$. The per-node attention in Eq.~(\ref{eq:agg_score})--(\ref{eq:agg_readout}) searches for the optimal depth-specific polynomial of this expansion, selecting the most informative smoothing level for each node without requiring the full infinite series.

\section{Experiments}
\label{sec:exp}

\subsection{Setup}
\label{sec:setup}

We evaluate on seven MAG benchmarks with frozen Qwen2-VL-7B-Instruct features~\cite{qwen2vl} under the transductive MAG retrieval protocol (Section~\ref{sec:problem}): all methods share the same frozen features, splits, and paired-node V2T/T2V metrics.
The comparison covers direct retrieval, pairwise retrieval, multimodal graph/MAG baselines, and \ours{}, attributing gains to graph-side post-alignment.
For Ele-fashion and Sports, all channels use observed structure edges (GPU budget; see Appendix~\ref{appendix:details}).
\ours{} uses full-graph contrastive negatives while Linear/VSE++ use in-batch negatives (Appendix~\ref{appendix:detailed_results}).
Further evidence and full implementation details appear in Appendices~\ref{appendix:detailed_results}--\ref{appendix:algorithm}.
We address five questions:
\textbf{Q1:} Does graph post-alignment deliver effective cross-modal alignment?
\textbf{Q2:} Which components matter most?
\textbf{Q3:} Which topology sources and learned route weights support refinement?
\textbf{Q4:} What are the efficiency trade-offs?

\subsection{Main Results}
\label{sec:main_results}

To answer \textbf{Q1}, Table~\ref{tab:main_results} reports the grouped comparison on all seven datasets.
It covers direct baselines, protocol-aligned pairwise retrieval models, multimodal graph/recent MAG methods, and \ours{} under one unified protocol; all \emph{VSE++} cells use protocol-aligned mini-batch training with the same frozen Qwen2-VL features~\cite{qwen2vl}.

\begin{table*}[t]
\centering
\caption{\textbf{Main retrieval comparison on seven MAG benchmarks.} V2T/T2V-averaged R@1/R@5/R@10 (\%); best/second/third displayed values are highlighted in \textcolor{darkred}{dark red}, \textcolor{royalblue}{royal blue}, and \textcolor{orange}{orange}.}
\label{tab:main_results}
\begingroup
\setlength{\tabcolsep}{1.70pt}
\renewcommand{\arraystretch}{1.10}
\small
% Local overrides (main table only): larger mean line, smaller $\pm$ std line.
\renewcommand{\tabvalue}[1]{\shortstack[c]{{\Large #1}\\{\scriptsize -}}}
\renewcommand{\tabstat}[2]{\shortstack[c]{{\Large #1}\\{\scriptsize$\pm$ #2}}}
\renewcommand{\tabbestvalue}[1]{\shortstack[c]{{\Large\textcolor{darkred}{\textbf{#1}}}\\{\scriptsize\textcolor{darkred}{-}}}}
\renewcommand{\tabbeststat}[2]{\shortstack[c]{{\Large\textcolor{darkred}{\textbf{#1}}}\\{\scriptsize\textcolor{darkred}{$\pm$ #2}}}}
\renewcommand{\tabsecondvalue}[1]{\shortstack[c]{{\Large\textcolor{royalblue}{\textbf{#1}}}\\{\scriptsize\textcolor{royalblue}{-}}}}
\renewcommand{\tabsecondstat}[2]{\shortstack[c]{{\Large\textcolor{royalblue}{\textbf{#1}}}\\{\scriptsize\textcolor{royalblue}{$\pm$ #2}}}}
\renewcommand{\tabthirdvalue}[1]{\shortstack[c]{{\Large\textcolor{orange}{\textbf{#1}}}\\{\scriptsize\textcolor{orange}{-}}}}
\renewcommand{\tabthirdstat}[2]{\shortstack[c]{{\Large\textcolor{orange}{\textbf{#1}}}\\{\scriptsize\textcolor{orange}{$\pm$ #2}}}}
\renewcommand{\tabempty}{\shortstack[c]{{\Large -}\\{\scriptsize -}}}
\resizebox{\textwidth}{!}{%
\begin{tabular}{l *{21}{c}}
\toprule
\rowcolor{gray!80}
\tabhead{Methods} &\multicolumn{3}{c}{\tabhead{Toys}} &\multicolumn{3}{c}{\tabhead{Movies}} &\multicolumn{3}{c}{\tabhead{Grocery}} &\multicolumn{3}{c}{\tabhead{RedditS}} &\multicolumn{3}{c}{\tabhead{Flickr30k}} &\multicolumn{3}{c}{\tabhead{Ele-fashion}} &\multicolumn{3}{c}{\tabhead{Sports}} \\
\cmidrule(lr){2-4} \cmidrule(lr){5-7} \cmidrule(lr){8-10} \cmidrule(lr){11-13} \cmidrule(lr){14-16} \cmidrule(lr){17-19} \cmidrule(lr){20-22}
\rowcolor{gray!80}
& \tabhead{R@1} & \tabhead{R@5} & \tabhead{R@10}& \tabhead{R@1} & \tabhead{R@5} & \tabhead{R@10}& \tabhead{R@1} & \tabhead{R@5} & \tabhead{R@10}& \tabhead{R@1} & \tabhead{R@5} & \tabhead{R@10}& \tabhead{R@1} & \tabhead{R@5} & \tabhead{R@10}& \tabhead{R@1} & \tabhead{R@5} & \tabhead{R@10}& \tabhead{R@1} & \tabhead{R@5} & \tabhead{R@10} \\
\midrule
\rowcolor{gray!38}
\multicolumn{22}{c}{\textbf{Direct Retrieval Baselines}} \\
\rowcolor{gray!10}
\textit{Raw} & \tabstat{12.1}{0.0} & \tabstat{26.0}{0.0} & \tabstat{33.4}{0.0} & \tabstat{9.8}{0.0} & \tabstat{18.6}{0.0} & \tabstat{23.3}{0.0} & \tabstat{3.5}{0.0} & \tabstat{8.3}{0.0} & \tabstat{11.3}{0.0} & \tabstat{2.2}{0.0} & \tabstat{6.4}{0.0} & \tabstat{9.1}{0.0} & \tabstat{4.4}{0.0} & \tabstat{11.1}{0.0} & \tabstat{15.4}{0.0} & \tabstat{0.9}{0.0} & \tabstat{2.5}{0.0} & \tabstat{3.8}{0.0} & \tabstat{1.6}{0.0} & \tabstat{4.6}{0.0} & \tabstat{6.8}{0.0} \\
\rowcolor{gray!5}
\textit{Linear} & \tabstat{38.8}{1.4} & \tabstat{66.8}{1.8} & \tabstat{76.8}{1.4} & \tabstat{37.7}{5.8} & \tabstat{62.3}{5.9} & \tabstat{71.3}{5.0} & \tabstat{13.6}{0.7} & \tabstat{33.5}{1.6} & \tabstat{45.0}{1.5} & \tabstat{3.5}{2.8} & \tabstat{10.8}{6.3} & \tabstat{16.8}{8.3} & \tabstat{40.3}{1.0} & \tabstat{67.6}{1.2} & \tabstat{77.6}{1.2} & \tabstat{14.2}{2.0} & \tabstat{34.9}{3.6} & \tabstat{46.8}{4.0}& \tabstat{19.0}{2.5} & \tabstat{43.6}{4.1} & \tabstat{54.8}{4.3} \\
\midrule
\rowcolor{gray!38}
\multicolumn{22}{c}{\textbf{Pairwise Image-Text Retrieval Baselines}} \\
\rowcolor{gray!10}
\textit{VSE++} & \tabstat{53.6}{0.5} & \tabstat{81.0}{0.9} & \tabstat{87.8}{0.5} & \tabstat{51.2}{2.2} & \tabstat{76.2}{1.6} & \tabstat{82.5}{0.8} & \tabstat{21.6}{0.7} & \tabstat{48.6}{1.0} & \tabstat{60.8}{0.7} & \tabstat{9.2}{1.7} & \tabstat{23.1}{2.6} & \tabstat{31.7}{3.0} & \tabstat{20.5}{4.4} & \tabstat{43.0}{7.0} & \tabstat{54.2}{7.1} & \tabstat{13.4}{7.8} & \tabstat{32.7}{16.5} & \tabstat{43.2}{19.7}& \tabstat{4.8}{2.5} & \tabstat{13.4}{5.9} & \tabstat{18.8}{7.4} \\
\rowcolor{gray!5}
\textit{D2S-VSE} & \tabstat{50.0}{1.2} & \tabstat{78.3}{1.6} & \tabstat{86.3}{0.9} & \tabstat{54.5}{1.7} & \tabstat{78.3}{1.4} & \tabstat{84.0}{1.2} & \tabstat{16.5}{6.4} & \tabstat{40.8}{12.3} & \tabstat{52.5}{13.2} & \tabstat{9.4}{0.3} & \tabstat{24.0}{0.6} & \tabstat{32.7}{0.3} & \tabstat{15.4}{9.5} & \tabstat{33.5}{17.4} & \tabstat{43.2}{20.3} & \tabstat{15.6}{5.1} & \tabstat{37.6}{9.8} & \tabstat{49.3}{11.2}& \tabstat{7.0}{0.1} & \tabstat{19.4}{1.0} & \tabstat{26.6}{1.5} \\
\rowcolor{gray!10}
\textit{CLIP-Refine} & \tabstat{53.6}{0.5} & \tabstat{80.6}{0.4} & \tabstat{86.9}{0.4} & \tabstat{50.5}{4.5} & \tabstat{72.9}{3.6} & \tabstat{79.0}{2.9} & \tabstat{38.0}{5.0} & \tabstat{63.7}{5.9} & \tabstat{72.2}{5.6} & \tabstat{13.8}{0.8} & \tabstat{29.5}{1.0} & \tabstat{37.6}{0.9} & \tabstat{32.0}{0.6} & \tabstat{56.4}{1.4} & \tabstat{66.6}{1.6} & \tabstat{22.5}{1.3} & \tabstat{51.3}{2.6} & \tabstat{64.4}{2.8}& \tabstat{19.0}{0.6} & \tabstat{44.8}{1.1} & \tabstat{57.2}{0.9} \\
\rowcolor{gray!5}
\textit{SmartCLIP} & \tabstat{56.7}{0.0} & \tabstat{83.1}{0.4} & \tabstat{89.4}{0.3} & \tabthirdstat{62.7}{1.3} & \tabthirdstat{83.4}{0.6} & \tabthirdstat{87.8}{0.4} & \tabstat{45.8}{0.4} & \tabstat{74.1}{0.3} & \tabthirdstat{82.3}{0.4} & \tabstat{15.3}{0.7} & \tabstat{33.9}{0.3} & \tabstat{44.3}{0.4} & \tabstat{44.2}{0.3} & \tabstat{71.5}{0.4} & \tabstat{80.5}{0.3} & \tabstat{25.7}{0.4} & \tabthirdstat{55.9}{0.8} & \tabsecondstat{69.0}{0.6}& \tabstat{18.9}{0.1} & \tabstat{44.8}{0.5} & \tabstat{56.8}{0.8} \\
\midrule
\rowcolor{gray!38}
\multicolumn{22}{c}{\textbf{Multimodal Graph Retrieval \& Recent MAG Models}} \\
\rowcolor{gray!10}
\textit{GSMN} & \tabstat{13.5}{2.0} & \tabstat{38.6}{3.2} & \tabstat{53.1}{3.4} & \tabstat{10.3}{1.8} & \tabstat{29.8}{3.0} & \tabstat{42.3}{3.4} & \tabstat{2.6}{0.9} & \tabstat{10.3}{2.8} & \tabstat{17.4}{4.0} & \tabstat{1.4}{0.2} & \tabstat{6.1}{0.8} & \tabstat{10.8}{1.4} & \tabstat{10.5}{0.5} & \tabstat{30.0}{1.0} & \tabstat{41.9}{1.1} & \tabstat{3.2}{0.2} & \tabstat{12.4}{0.6} & \tabstat{20.3}{1.1}& \tabstat{7.9}{1.0} & \tabstat{24.8}{2.3} & \tabstat{36.0}{2.8} \\
\rowcolor{gray!5}
\textit{MMGCN} & \tabstat{18.8}{2.4} & \tabstat{50.6}{3.4} & \tabstat{67.1}{2.7} & \tabstat{14.1}{4.2} & \tabstat{40.5}{8.0} & \tabstat{55.7}{8.3} & \tabstat{10.3}{0.5} & \tabstat{31.7}{0.8} & \tabstat{45.9}{0.9} & \tabstat{2.1}{0.1} & \tabstat{8.8}{0.6} & \tabstat{15.5}{0.8} & \tabstat{30.8}{4.9} & \tabstat{66.0}{5.8} & \tabstat{79.7}{4.5} & \tabstat{9.5}{2.0} & \tabstat{28.0}{5.0} & \tabstat{40.5}{6.3}& \tabstat{16.9}{2.1} & \tabstat{47.8}{4.3} & \tabstat{64.9}{4.4} \\
\rowcolor{gray!10}
\textit{MGAT} & \tabstat{15.9}{1.4} & \tabstat{43.8}{2.4} & \tabstat{59.9}{2.4} & \tabstat{12.9}{1.6} & \tabstat{37.2}{2.3} & \tabstat{51.5}{2.4} & \tabstat{6.1}{2.2} & \tabstat{20.6}{5.0} & \tabstat{31.5}{5.8} & \tabstat{2.7}{0.4} & \tabstat{9.8}{0.7} & \tabstat{16.3}{0.8} & \tabstat{15.9}{2.3} & \tabstat{42.0}{4.5} & \tabstat{56.9}{4.8} & \tabstat{5.5}{0.9} & \tabstat{16.7}{2.3} & \tabstat{25.0}{3.3}& \tabstat{9.7}{0.8} & \tabstat{31.4}{2.3} & \tabstat{46.5}{2.2} \\
\rowcolor{gray!5}
\textit{LGMRec} & \tabstat{34.4}{5.9} & \tabstat{69.6}{6.6} & \tabstat{81.4}{5.0} & \tabstat{32.5}{3.4} & \tabstat{63.8}{2.9} & \tabstat{75.1}{2.2} & \tabstat{19.2}{0.9} & \tabstat{47.1}{1.5} & \tabstat{60.8}{1.2} & \tabstat{4.6}{0.7} & \tabstat{15.7}{1.7} & \tabstat{24.7}{1.4} & \tabstat{43.8}{1.7} & \tabthirdstat{76.6}{1.3} & \tabthirdstat{86.4}{0.9} & \tabstat{15.3}{1.1} & \tabstat{39.2}{2.0} & \tabstat{52.6}{2.1}& \tabstat{23.7}{1.9} & \tabstat{56.6}{3.2} & \tabthirdstat{71.7}{3.2} \\
\rowcolor{gray!10}
\textit{DMGC} & \tabthirdstat{85.2}{1.5} & \tabthirdstat{98.5}{0.7} & \tabthirdstat{99.0}{0.4} & \tabthirdstat{69.4}{10.1} & \tabstat{81.9}{6.4} & \tabstat{85.8}{5.2} & \tabthirdstat{56.3}{16.8} & \tabthirdstat{74.7}{10.8} & \tabstat{80.1}{8.6} & \tabthirdstat{75.6}{10.6} & \tabthirdstat{90.8}{8.0} & \tabthirdstat{93.5}{5.6} & \tabthirdstat{49.1}{7.4} & \tabstat{66.3}{9.5} & \tabstat{72.2}{8.9} & \tabsecondstat{43.2}{36.5} & \tabsecondstat{57.8}{34.5} & \tabthirdstat{63.5}{31.4}& \tabthirdstat{47.9}{29.5} & \tabthirdstat{62.4}{26.2} & \tabstat{67.8}{23.3} \\
\rowcolor{gray!5}
\textit{DGF} & \tabsecondstat{85.8}{7.9} & \tabsecondstat{98.7}{1.0} & \tabsecondstat{99.6}{0.4} & \tabsecondstat{79.5}{2.9} & \tabsecondstat{95.3}{0.2} & \tabsecondstat{97.6}{0.4} & \tabsecondstat{85.8}{8.0} & \tabsecondstat{98.7}{1.2} & \tabsecondstat{99.6}{0.4} & \tabsecondstat{77.6}{9.1} & \tabsecondstat{96.8}{2.4} & \tabsecondstat{98.6}{1.4} & \tabsecondstat{71.2}{25.4} & \tabsecondstat{90.9}{12.3} & \tabsecondstat{94.9}{7.4} & \tabthirdstat{30.8}{23.3} & \tabstat{53.6}{29.8} & \tabstat{62.6}{28.9}& \tabsecondstat{66.8}{6.8} & \tabsecondstat{88.7}{3.2} & \tabsecondstat{92.8}{2.2} \\
\midrule
\rowcolor{gray!38}
\multicolumn{22}{c}{\textbf{Our Method}} \\
\rowcolor{gray!10}
\ours{} & \tabbeststat{87.4}{0.3} & \tabbeststat{99.5}{0.0} & \tabbeststat{99.9}{0.0} & \tabbeststat{80.3}{0.7} & \tabbeststat{97.9}{0.3} & \tabbeststat{99.0}{0.2} & \tabbeststat{88.8}{0.5} & \tabbeststat{99.7}{0.1} & \tabbeststat{100.0}{0.0} & \tabbeststat{78.3}{0.4} & \tabbeststat{99.1}{0.3} & \tabbeststat{99.7}{0.1} & \tabbeststat{91.7}{0.5} & \tabbeststat{99.6}{0.1} & \tabbeststat{99.9}{0.0} & \tabbeststat{76.3}{0.5} & \tabbeststat{94.6}{0.2} & \tabbeststat{97.3}{0.1}& \tabbeststat{72.6}{1.1} & \tabbeststat{98.6}{0.2} & \tabbeststat{99.8}{0.1} \\
\bottomrule
\end{tabular}}
\endgroup
\end{table*}

\textbf{Graph post-alignment delivers large and reliable gains (Q1).}
\ours{} improves over \emph{Linear} on every dataset, with mean R@10 gains of +23.1 (Toys), +27.7 (Movies), +55.0 (Grocery), +82.9 (RedditS), +22.3 (Flickr30k), +50.5 (Ele-fashion), and +45.0 (Sports).
Pairwise baselines and older graph models remain weak on harder graphs (Grocery, RedditS), while DGF has much larger seed variance (e.g., $\pm$7.9--9.1 vs.\ $\pm$0.3--0.7 for \ours{}); Appendix~\ref{appendix:rank_stability} reports per-dataset stability comparisons and convergence diagnostics confirming that \ours{} is both accurate and stable across seeds.
\ours{} achieves SOTA or tied SOTA recall on all seven benchmarks; on the two largest graphs, it reaches 76.3/97.3\% R@1/R@10 on Ele-fashion and 72.6/99.8\% on Sports. Qualitatively, Figure~\ref{fig:tsne} (Appendix) confirms that \ours{} bridges the cross-modal gap: t-SNE on Grocery shows Raw separates modalities, Linear partially mixes, DGF narrows the gap but retains offsets, while \ours{} nearly overlays paired neighborhoods (gap 0.08$\to$0.03).

\subsection{Ablation Study}
\label{sec:ablation}

To answer \textbf{Q2}--\textbf{Q3}, we ablate core modules and topology sources on Grocery (3-seed average).

\begin{table}[t]
    \centering
    \begin{minipage}[t]{0.48\textwidth}
        \centering
        \caption{\textbf{Core module ablation on Grocery (3-seed average).} V2T/T2V-averaged under a shared base config. $\Delta$R@10 = drop from Full \ours{}; w/o cross-modal prop. sets $\beta{=}0$.}
        \label{tab:core_ablation}
        \scriptsize
        \setlength{\tabcolsep}{3pt}
        \renewcommand{\arraystretch}{0.95}
        \begin{tabular}{lcccc}
        \toprule
        \rowcolor{gray!80}
        \tabhead{Variant} &
        \tabhead{R@1} &
        \tabhead{R@10} &
        \tabhead{MRR} &
        \tabhead{$\Delta$R@10} \\
        \midrule
        \rowcolor{gray!10}
        Full \ours{} & \tabbest{79.43} & \tabbest{99.55} & \tabbest{88.18} & --- \\
        w/o cross-modal prop. & 54.88 & 91.51 & 67.87 & -8.05 \\
        \rowcolor{gray!10}
        w/o adaptive agg. & 71.77 & 97.26 & 82.44 & -2.29 \\
        w/o topology optim. & 73.22 & 98.21 & 83.30 & -1.34 \\
        \rowcolor{gray!10}
        w/o restart & 75.87 & 99.04 & 85.72 & -0.51 \\
        \bottomrule
        \end{tabular}
    \end{minipage}
    \hfill
    \begin{minipage}[t]{0.48\textwidth}
        \centering
        \caption{\textbf{Topology-source ablation on Grocery (3-seed average).} Per-dataset tuned recipe. Structure = observed MAG edges; V/T/Cross $k$NN = frozen feature-neighbor graphs.}
        \label{tab:topology_source_ablation}
        \scriptsize
        \setlength{\tabcolsep}{3pt}
        \renewcommand{\arraystretch}{0.95}
        \begin{tabular}{lrrrr}
        \toprule
        \rowcolor{gray!80}
        \tabhead{Topology source} & \tabhead{R@1} & \tabhead{R@10} & \tabhead{MRR} & \tabhead{MeanR} \\
        \midrule
        \rowcolor{gray!10}
        Structure (observed) & 88.33 & 99.96 & 93.65 & 1.17 \\
        Visual $k$NN & 84.56 & 99.96 & 91.76 & 1.20 \\
        \rowcolor{gray!10}
        Text $k$NN & 84.12 & 99.99 & 91.58 & 1.20 \\
        Cross-modal $k$NN & 51.71 & 84.40 & 63.35 & 33.21 \\
        \rowcolor{gray!10}
        Structure + $k$NN (uniform) & 77.25 & 99.37 & 86.43 & 1.51 \\
        \ours{} (learned weights) & \textbf{89.33} & 99.96 & \textbf{94.18} & \textbf{1.15} \\
        \bottomrule
        \end{tabular}
    \end{minipage}
\end{table}

\emph{Cross-modal propagation} is the dominant source: removing it drops R@1 by 24.55 (79.43\%$\to$54.88\%) and MRR by 20.31 (88.18\%$\to$67.87\%), far exceeding drops from removing topology optimization (R@1 $-6.21$) or adaptive aggregation (R@1 $-7.66$). This confirms that graph refinement should not reduce to independent intra-modal smoothing: most gain comes from cross-modal denoising through learned channels. Removing restart ($\alpha{=}0$) reduces R@1 by 3.56.

\textbf{Topology-source ablation.}
The structure graph alone is strong; single $k$NN sources lose rank quality. Cross-modal $k$NN alone fails (MeanR 33.21 vs.\ 1.15) without intra-modal denoising, and uniform union is insufficient. \ours{} performs best by learning how much to trust each route. Sensitivity curves: Appendix~\ref{appendix:detailed_results}.

\subsection{Efficiency}
\label{sec:efficiency}

To answer \textbf{Q4}, we measure GPU memory, per-epoch time, inference latency, and trainable parameters on Toys (20.7K nodes) and Sports (50.3K nodes) under identical default configuration (same frozen Qwen 3584d, bf16 AMP, same GPU). All methods share the frozen backbone; only post-encoder parameters are trained. \ours{} achieves the best retrieval under this protocol (Table~\ref{tab:main_results}).

\begin{table}[t]
    \centering
    \begin{minipage}[t]{0.47\textwidth}
        \centering
        \caption{\textbf{Toys (20.7K).}}
        \label{tab:efficiency_toys}
        \scriptsize
        \setlength{\tabcolsep}{1.8pt}
        \renewcommand{\arraystretch}{0.95}
        \begin{tabular}{lcccc}
        \toprule
        \rowcolor{gray!80}
        \tabhead{Method} & \tabhead{Params (M)} & \tabhead{Mem. (GB)} & \tabhead{Train (s/ep)} & \tabhead{Eval (s)} \\
        \midrule
        \rowcolor{gray!10}
        CLIP-Refine$^*$ & 3.67 & 0.67 & 0.11 & 0.07 \\
        SmartCLIP$^*$   & 7.34 & 1.32 & 0.25 & 0.05 \\
        DGF         & 1.88 & 7.61  & 0.14 & 0.05 \\
        \rowcolor{gray!10}
        \ours{}     & 2.36 & 7.78  & 0.12 & 0.14 \\
        \bottomrule
        \end{tabular}
    \end{minipage}
    \hfill
    \begin{minipage}[t]{0.47\textwidth}
        \centering
        \caption{\textbf{Sports (50.3K).}}
        \label{tab:efficiency_sports}
        \scriptsize
        \setlength{\tabcolsep}{1.8pt}
        \renewcommand{\arraystretch}{0.95}
        \begin{tabular}{lcccc}
        \toprule
        \rowcolor{gray!80}
        \tabhead{Method} & \tabhead{Params (M)} & \tabhead{Mem. (GB)} & \tabhead{Train (s/ep)} & \tabhead{Eval (s)} \\
        \midrule
        \rowcolor{gray!10}
        CLIP-Refine$^*$ & 3.67 & 1.46  & 0.35 & 0.24 \\
        SmartCLIP$^*$   & 7.34 & 2.90  & 0.52 & 0.22 \\
        DGF         & 1.88 & 39.35 & 0.62 & 0.13 \\
        \rowcolor{gray!10}
        \ours{}     & 2.36 & 30.04 & 0.45 & 0.27 \\
        \bottomrule
        \end{tabular}
    \end{minipage}

    \vspace{2pt}
    {\small $^*$Mini-batch training (512); DGF and \ours{} use full-graph training (forward and backward over all nodes).}
\end{table}

CLIP-Refine and SmartCLIP train on mini-batches (512), so their backward pass only flows through a small MLP on each batch, yielding low memory (0.67--1.32\,GB) and fast per-epoch time (0.11--0.25\,s) on Toys. In contrast, DGF and \ours{} perform full-graph propagation, which requires backpropagating through the entire graph for all nodes simultaneously; this raises the training cost (Table~\ref{tab:efficiency_toys}). On Toys, both are comparable (7.78\,GB vs.\ 7.61\,GB and 0.12\,s/ep vs.\ 0.14\,s/ep). On Sports, \ours{} is substantially more efficient: 30.04\,GB and 0.45\,s/ep vs.\ DGF's 39.35\,GB and 0.62\,s/ep.

The gap stems from the loss design. DGF's MMS loss materializes three $N{\times}N$ similarity matrices during training. The quadratic dependence is inherent: DGF's native loss memory scales as $O(N^2)$, which inflates its training cost at larger scales. On Sports ($N{=}50.3$K) this dominates the budget despite DGF's compact 64-d hidden state. \ours{} instead accumulates InfoNCE over batches of size $B$ against the full training gallery $G$, yielding $B{\times}G$ similarity matrices whose memory is independent of $N$, with a single backward pass (naturally avoiding the $O(N^2)$ bottleneck). At inference, all methods are fast (0.05--0.27\,s across both datasets), and topology construction is one-off (${\approx}0.31$\,s on Toys, ${\approx}0.34$\,s on Sports). We discuss further efficiency opportunities in Section~\ref{appendix:limitations}.

\section{Conclusion}
\label{sec:conclusion}

We revisit multimodal alignment on MAGs from a graph signal smoothing perspective, treating frozen multimodal encoder outputs as graph signals for retrieval-oriented post-alignment.
Through a controlled empirical study, we identify topology mismatch across modalities and the finite effective regime of graph propagation as two core bottlenecks that limit existing graph-based retrieval pipelines.
Motivated by these findings, \ours{} jointly learns modality-specific propagation operators, performs coupled finite-step smoothing, and adaptively reads out per-node smoothing trajectories to select the most informative depth.
Across seven benchmarks, \ours{} achieves state-of-the-art or tied state-of-the-art recall under the transductive MAG retrieval protocol, with especially strong gains on noisier graphs and stable behavior across seeds.
We believe this work demonstrates that MAG topology can serve as an effective lightweight post-encoder for frozen multimodal embeddings, and provides a foundation for structure-driven alignment on larger, multi-relational corpora.

\begin{ack}
% Camera-ready only: funding and competing interests disclosure (omitted automatically for anonymous submission).
% See \url{https://neurips.cc/Conferences/2026/PaperInformation/FundingDisclosure}.
\end{ack}

\medskip
{
\small
\bibliographystyle{plainnat}
\bibliography{reference}
}

\appendix

\clearpage
\section{Related Works}
\label{appendix:related}

Due to space constraints, the main paper integrates the most central background into the Introduction and Method sections.
Here we organize the closest related literature into four strands that are directly relevant to our setting and method design.

\subsection{Vision-Language Pretraining and Frozen Multimodal Encoders}
Large-scale multimodal alignment has progressed from early fusion-style pretraining, e.g., ViLBERT, UNITER, and OSCAR, to stronger contrastive or bootstrapped alignment systems such as CLIP, ALIGN, FILIP, DeCLIP, ALBEF, BLIP, BLIP-2, and SigLIP~\cite{vilbert,uniter,oscar,clip,align,filip,declip,albef,blip,blip2,SigLIP}.
More recent multimodal foundation models, including Flamingo, Kosmos-1, ImageBind, and Qwen-VL, provide increasingly strong off-the-shelf image and text representations~\cite{flamingo,kosmos,imagebind,qwenvl}.
Our setting is closest to the frozen-encoder regime: we do not fine-tune the pretrained multimodal backbone itself, but instead ask whether graph structure can serve as a lightweight post-encoder that improves retrieval on top of frozen multimodal features.

\subsection{Pairwise Image-Text Retrieval}
Image-text retrieval has evolved from early visual-semantic embedding methods such as DeViSE and VSE++ to fine-grained cross-attention, message passing, and reasoning-based matching methods such as SCAN, CAMP, VSRN, GSMN, and IMRAM~\cite{frome2013devise,faghri2017vse++,lee2018stacked,wang2019camp,li2019visual,gsmn,chen2020imram}.
Another important line addresses ambiguity, redundancy, and correspondence noise through polysemous or probabilistic embeddings, consensus and context-aware matching, learned pooling, negative-aware attention, and stronger local alignment modules~\cite{song2019polysemous,chun2021probabilistic,wang2020consensus,zhang2020context,chen2021learning,diao2021similarity,zhang2022negative,kim2023improving,pan2023fine,fu2023learning,fu2024linguistic,liu2025asymmetric}.
More recent retrieval models further explore transport-based matching, post-pretraining modality alignment, modular long-short caption alignment, and asymmetry between visual and textual information capacity~\cite{wang2021wasserstein,qin2022deep,qin2023cross,D2SVSE2025,ClipRefine2025,SmartCLIP2025}.
These methods substantially improve pair-wise alignment, but they still mainly operate on isolated image-text instances.
In contrast, our goal is to exploit graph context as an additional structural source for retrieval-oriented refinement.

\subsection{Multimodal Graph Learning and MAG Retrieval}
Multimodal graph learning has developed rapidly in recommendation and graph representation learning.
Representative models include MMGCN, MGAT, and LGMRec, which integrate visual and textual signals into graph-based recommendation pipelines~\cite{MMGCN,MGAT,LGMRec}.
Recent work broadens this landscape to more general multimodal graph tasks and benchmarks, including MMGL, Graph4MM, MIG-GT, DGF, DMGC, NTSFormer, MAGB, and Mosaic of Modalities~\cite{MMGL,graph4mm,MIG-GT,DGF,DMGC,NTSFormer,MAGB,mmgraph}.
These studies establish that multimodal graphs are a meaningful learning substrate, but most focus on recommendation, clustering, classification, or general multimodal graph benchmarks rather than paired image-text retrieval with frozen VLM features.
Our work is most related to this line because we also use MAG as the learning substrate, but we specifically target retrieval refinement and study how graph structure should interact with frozen multimodal embeddings.

\subsection{Graph Smoothing, Over-smoothing, and Topology Learning}
Our method is also connected to general graph representation learning, graph recommendation, and graph structure learning.
Classical propagation architectures such as GCN, GAT, GraphSAGE, and GIN establish the core message-passing view~\cite{GCN,GAT,GraphSAGE,gin}, while deeper or more stable graph models and self-supervised graph learners further analyze how propagation depth affects representation quality~\cite{gcn2,gat2,graphmae2,oversmoothing}.
In recommendation-oriented graph settings, models such as PinSAGE and LightGCN highlight the importance of sparse, high-signal propagation on large noisy graphs~\cite{pinsage,lightgcn}.
Graph structure learning further emphasizes that the quality of the propagation graph itself can be as important as the propagation rule~\cite{gsl_survey}.
These ideas motivate our smoothing perspective, but our setting differs in two ways: we optimize modality-specific and cross-modal propagation operators for retrieval, and we explicitly aggregate finite-step smoothing trajectories to stay in the useful smoothing regime before semantic collapse dominates.

\section{More Experiments}
\label{appendix:detailed_results}

This section collects the empirical checks that support the main claims beyond Table~\ref{tab:main_results}: protocol controls, topology perturbations, cross-dataset L1/L2 diagnostics, rank-quality and stability summaries, hyperparameter sensitivity, and training-time efficiency.

\subsection{Protocol and Topology Controls}
\label{appendix:protocol_controls}

We include three controls that clarify what evidence \ours{} is allowed to use and why the learned propagation graph matters.
First, Table~\ref{tab:topology_perturbation_control} keeps the same training protocol but replaces the original candidate graph with randomized or degree-preserving rewired alternatives; the large drop from the original candidate graph shows that the gain depends on meaningful MAG structure rather than merely seeing all nodes at training time.
The GOMA-adapter-only row disables all graph propagation while retaining GOMA's full training loop (the same full-graph contrastive loss, CDE, and topology contrast used in the complete model). It substantially outperforms the mini-batch Linear baseline (e.g., Grocery R@1 50.41 vs.\ 13.6), showing that the full-graph contrastive objective and topology regularization alone provide a substantial gain over in-batch supervision, even without propagation.
Second, Table~\ref{tab:self_pair_edge_control} confirms the protocol boundary: the reported setting removes cross-modal self-pair edges, while control variants that insert such edges nearly solve the task by construction and are not part of the comparison protocol.
Third, Table~\ref{tab:l1_l2_cross_dataset_control} runs a fixed-depth frozen-feature structural smoothing diagnostic across four datasets, showing both low visual/text neighborhood overlap and finite-depth improvements before excessive smoothing.
Figure~\ref{fig:protocol_controls} summarizes the first two controls visually, while the tables provide the full metrics.

\begin{figure*}[t]
    \centering
    \includegraphics[width=0.98\textwidth]{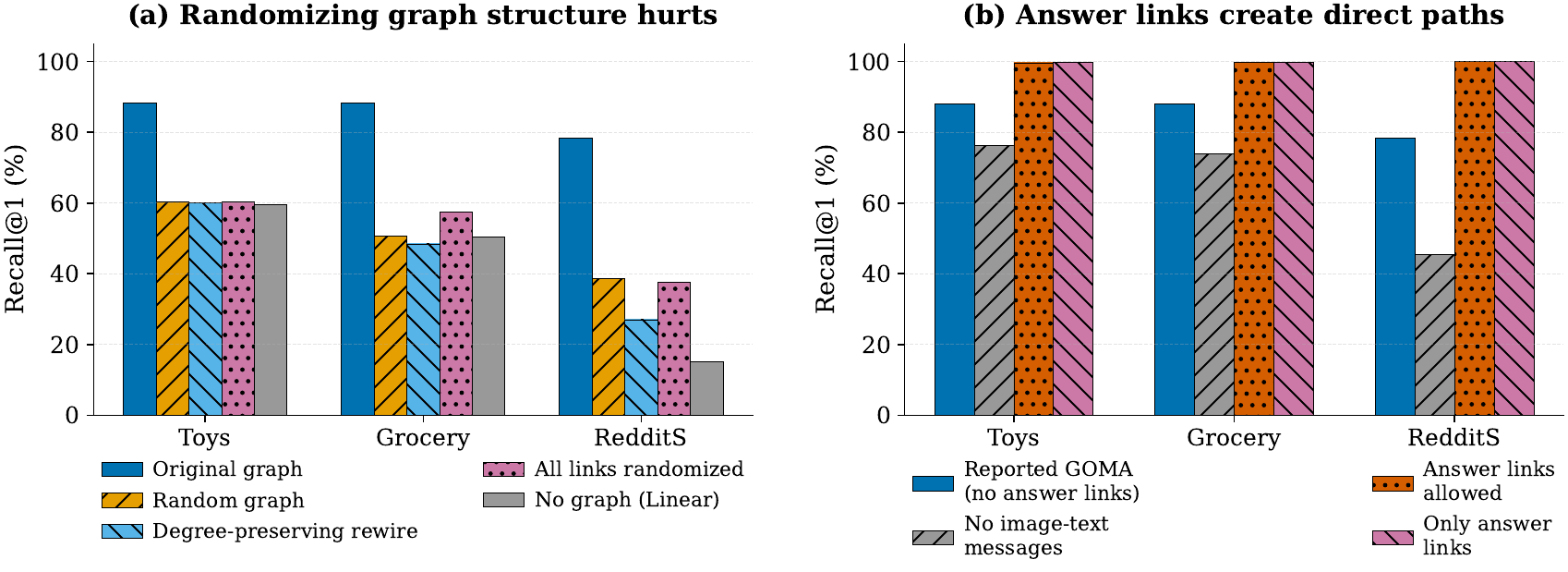}
    \caption{\textbf{Protocol and topology controls.} (a) Randomizing the candidate graph sharply reduces R@1, indicating that graph-side gains depend on meaningful neighborhood structure. (b) Self-pair image-text links create direct answer paths; reported \ours{} removes them and uses only non-self cross-modal links.}
    \label{fig:protocol_controls}
\end{figure*}

\begin{table*}[t]
\centering
\caption{\textbf{Topology perturbation control.} Uses the per-dataset tuned recipe (Table~\ref{tab:main_results}). We replace the original candidate graph with randomized or degree-preserving rewired alternatives. The large drop confirms that gains depend on meaningful graph structure, not merely on seeing all nodes during training.}
\label{tab:topology_perturbation_control}
\scriptsize
\resizebox{\textwidth}{!}{%
\begin{tabular}{llllllll}
\toprule
Dataset & Variant & Seeds & R@1 & R@5 & R@10 & MRR & MeanR \\
\midrule
Grocery & Original GOMA candidate graph & 43,44,45 & 88.36 $\pm$ 0.28 & 99.65 $\pm$ 0.05 & 99.95 $\pm$ 0.02 & 93.62 $\pm$ 0.15 & 1.17 \\
Grocery & Observed graph links randomized & 43,44,45 & 50.74 $\pm$ 4.22 & 77.28 $\pm$ 3.01 & 84.75 $\pm$ 2.11 & 62.58 $\pm$ 3.59 & 25.24 \\
Grocery & Observed graph links degree-rewired & 43,44,45 & 48.44 $\pm$ 2.75 & 75.65 $\pm$ 1.52 & 83.62 $\pm$ 1.17 & 60.54 $\pm$ 2.27 & 30.77 \\
Grocery & All non-self links randomized & 43,44,45 & 57.45 $\pm$ 0.62 & 82.11 $\pm$ 0.21 & 87.79 $\pm$ 0.27 & 68.31 $\pm$ 0.45 & 22.18 \\
Grocery & GOMA adapter only (no propagation) & 43,44,45 & 50.41 $\pm$ 0.33 & 75.88 $\pm$ 0.37 & 82.38 $\pm$ 0.53 & 61.71 $\pm$ 0.39 & 45.29 \\
RedditS & Original GOMA candidate graph & 43,44,45 & 78.28 $\pm$ 0.50 & 98.81 $\pm$ 0.32 & 99.59 $\pm$ 0.10 & 87.90 $\pm$ 0.40 & 1.38 \\
RedditS & Observed graph links randomized & 43,44,45 & 38.66 $\pm$ 1.14 & 62.43 $\pm$ 1.25 & 71.21 $\pm$ 0.83 & 49.76 $\pm$ 1.07 & 41.51 \\
RedditS & Observed graph links degree-rewired & 43,44,45 & 27.02 $\pm$ 0.32 & 48.18 $\pm$ 0.30 & 57.59 $\pm$ 0.05 & 37.18 $\pm$ 0.18 & 109.31 \\
RedditS & All non-self links randomized & 43,44,45 & 37.54 $\pm$ 2.85 & 61.64 $\pm$ 1.51 & 70.39 $\pm$ 1.47 & 48.65 $\pm$ 2.30 & 43.30 \\
RedditS & GOMA adapter only (no propagation) & 43,44,45 & 15.04 $\pm$ 1.14 & 32.75 $\pm$ 1.51 & 41.91 $\pm$ 1.36 & 23.86 $\pm$ 1.29 & 242.47 \\
Toys & Original GOMA candidate graph & 43,44,45 & 88.26 $\pm$ 0.50 & 99.70 $\pm$ 0.04 & 99.96 $\pm$ 0.01 & 93.55 $\pm$ 0.34 & 1.17 \\
Toys & Observed graph links randomized & 43,44,45 & 60.43 $\pm$ 1.19 & 84.88 $\pm$ 0.56 & 90.43 $\pm$ 0.36 & 71.20 $\pm$ 0.95 & 22.81 \\
Toys & Observed graph links degree-rewired & 43,44,45 & 60.05 $\pm$ 0.65 & 84.69 $\pm$ 0.26 & 90.38 $\pm$ 0.34 & 70.89 $\pm$ 0.57 & 22.67 \\
Toys & All non-self links randomized & 43,44,45 & 60.25 $\pm$ 1.56 & 84.83 $\pm$ 0.72 & 90.56 $\pm$ 0.46 & 71.13 $\pm$ 1.19 & 19.85 \\
Toys & GOMA adapter only (no propagation) & 43,44,45 & 59.48 $\pm$ 0.32 & 82.48 $\pm$ 0.15 & 88.05 $\pm$ 0.31 & 69.68 $\pm$ 0.28 & 32.07 \\
\bottomrule
\end{tabular}}
\end{table*}

\begin{table*}[t]
\centering
\caption{\textbf{Cross-modal self-pair edge control.} Uses the per-dataset tuned recipe (Table~\ref{tab:main_results}). Reported \ours{} excludes self-pair image-text edges from message passing. Variants that allow them inject the ground-truth answer edge into the candidate graph, essentially solving the task by construction (these are control-only settings, not part of the evaluation protocol). The no-image-text-message-passing variant sets the cross-modal coupling coefficient $\beta{=}0$ so that only intra-modal propagation is active, but retains the full GOMA architecture.}
\label{tab:self_pair_edge_control}
\scriptsize
\resizebox{\textwidth}{!}{%
\begin{tabular}{lllllllll}
\toprule
Dataset & Variant & Protocol & Seeds & R@1 & R@5 & R@10 & MRR & MeanR \\
\midrule
Grocery & Reported GOMA (no self-pair links) & In protocol & 43,44,45 & 88.15 $\pm$ 0.70 & 99.65 $\pm$ 0.05 & 99.94 $\pm$ 0.01 & 93.48 $\pm$ 0.41 & 1.17 \\
Grocery & No image-text message passing & In protocol & 43,44,45 & 74.00 $\pm$ 0.24 & 95.46 $\pm$ 0.14 & 98.26 $\pm$ 0.13 & 83.34 $\pm$ 0.07 & 2.04 \\
Grocery & Self-pair answer links allowed & Control only & 43,44,45 & 99.80 $\pm$ 0.01 & 100.00 $\pm$ 0.00 & 100.00 $\pm$ 0.00 & 99.90 $\pm$ 0.01 & 1.00 \\
Grocery & Only self-pair answer links & Control only & 43,44,45 & 99.83 $\pm$ 0.03 & 100.00 $\pm$ 0.00 & 100.00 $\pm$ 0.00 & 99.92 $\pm$ 0.02 & 1.00 \\
RedditS & Reported GOMA (no self-pair links) & In protocol & 43,44,45 & 78.42 $\pm$ 0.64 & 98.97 $\pm$ 0.08 & 99.72 $\pm$ 0.03 & 88.07 $\pm$ 0.38 & 1.37 \\
RedditS & No image-text message passing & In protocol & 43,44,45 & 45.53 $\pm$ 1.69 & 73.08 $\pm$ 1.87 & 80.88 $\pm$ 1.35 & 57.88 $\pm$ 1.68 & 26.72 \\
RedditS & Self-pair answer links allowed & Control only & 43,44,45 & 99.94 $\pm$ 0.02 & 100.00 $\pm$ 0.00 & 100.00 $\pm$ 0.00 & 99.97 $\pm$ 0.01 & 1.00 \\
RedditS & Only self-pair answer links & Control only & 43,44,45 & 99.96 $\pm$ 0.02 & 100.00 $\pm$ 0.00 & 100.00 $\pm$ 0.00 & 99.98 $\pm$ 0.01 & 1.00 \\
Toys & Reported GOMA (no self-pair links) & In protocol & 43,44,45 & 88.11 $\pm$ 0.54 & 99.70 $\pm$ 0.10 & 99.97 $\pm$ 0.02 & 93.46 $\pm$ 0.32 & 1.17 \\
Toys & No image-text message passing & In protocol & 43,44,45 & 76.24 $\pm$ 0.74 & 96.28 $\pm$ 0.60 & 98.59 $\pm$ 0.39 & 84.96 $\pm$ 0.71 & 1.91 \\
Toys & Self-pair answer links allowed & Control only & 43,44,45 & 99.65 $\pm$ 0.09 & 99.92 $\pm$ 0.08 & 99.97 $\pm$ 0.01 & 99.77 $\pm$ 0.06 & 1.01 \\
Toys & Only self-pair answer links & Control only & 43,44,45 & 99.69 $\pm$ 0.07 & 99.97 $\pm$ 0.01 & 99.97 $\pm$ 0.01 & 99.82 $\pm$ 0.03 & 1.01 \\
\bottomrule
\end{tabular}}
\end{table*}

\begin{table}[t]
\centering
\caption{\textbf{Frozen-feature L1/L2 diagnostic across datasets.} V/T $k$NN overlap = fraction of shared neighbors between visual and textual $k$NN graphs (lower $\to$ stronger modality-topology mismatch, \textbf{L1}). MeanR@0 = no propagation; MeanR@best = best shallow depth; MeanR@deep = depth 12. Finite-step smoothing improves MeanR before over-smoothing degrades it (\textbf{L2}); RedditS shows no degradation at depth 12 due to its sparser graph structure and smaller semantic overlap.}
\label{tab:l1_l2_cross_dataset_control}
\scriptsize
\resizebox{\columnwidth}{!}{%
\begin{tabular}{lrrrrrr}
\toprule
Dataset & V/T kNN overlap & Best depth & MeanR@0 & MeanR@best & Deep depth & MeanR@deep \\
\midrule
Toys & 0.142 & 4 & 407.80 & 117.16 & 12 & 155.90 \\
Grocery & 0.105 & 4 & 936.10 & 544.83 & 12 & 692.30 \\
RedditS & 0.033 & 8 & 841.92 & 759.12 & 12 & 759.12 \\
Sports & 0.120 & 8 & 2193.12 & 947.01 & 12 & 982.85 \\
\bottomrule
\end{tabular}}
\end{table}

\subsection{Rank Quality, Stability, and Convergence}
\label{appendix:rank_stability}

Figure~\ref{fig:indepth_why_goma} complements the main table with MeanR, MRR, R@1 stability, and training convergence.
For the model-specific panels, we focus on SmartCLIP and DGF because SmartCLIP is the strongest protocol-aligned pairwise image-text baseline in Table~\ref{tab:main_results}, while DGF is the strongest graph baseline and closest competitor.
The MeanR/MRR panels show that \ours{} improves not only top-$K$ recall but also rank quality. The stability panel highlights that the strong DGF baseline has substantially larger seed-level variance, and the convergence panel shows smooth R@10 saturation across all seven benchmarks.
Together, these results support the main conclusion that \ours{} is both accurate and stable under the transductive MAG retrieval protocol.
Table~\ref{tab:main_results} shows that DMGC and DGF, the two strongest graph baselines, exhibit large seed-level standard deviations on several datasets (e.g., DMGC on Ele-fashion: R@1 $43.2{\pm}36.5$; DGF on Toys: R@1 $85.8{\pm}7.9$).
Both are clustering-oriented graph learners run under the same protocol and hyperparameter budget as \ours{}.
We attribute their higher variance to a design difference: clustering objectives optimize inter-cluster separation rather than the paired-node retrieval space, making embeddings more initialization-sensitive on noisier graphs.
\ours{}, in contrast, jointly optimizes topology learning, propagation, and readout through paired contrastive supervision, yielding stabler trajectories across seeds.

\begin{figure*}[t]
    \centering
    \includegraphics[width=0.98\textwidth]{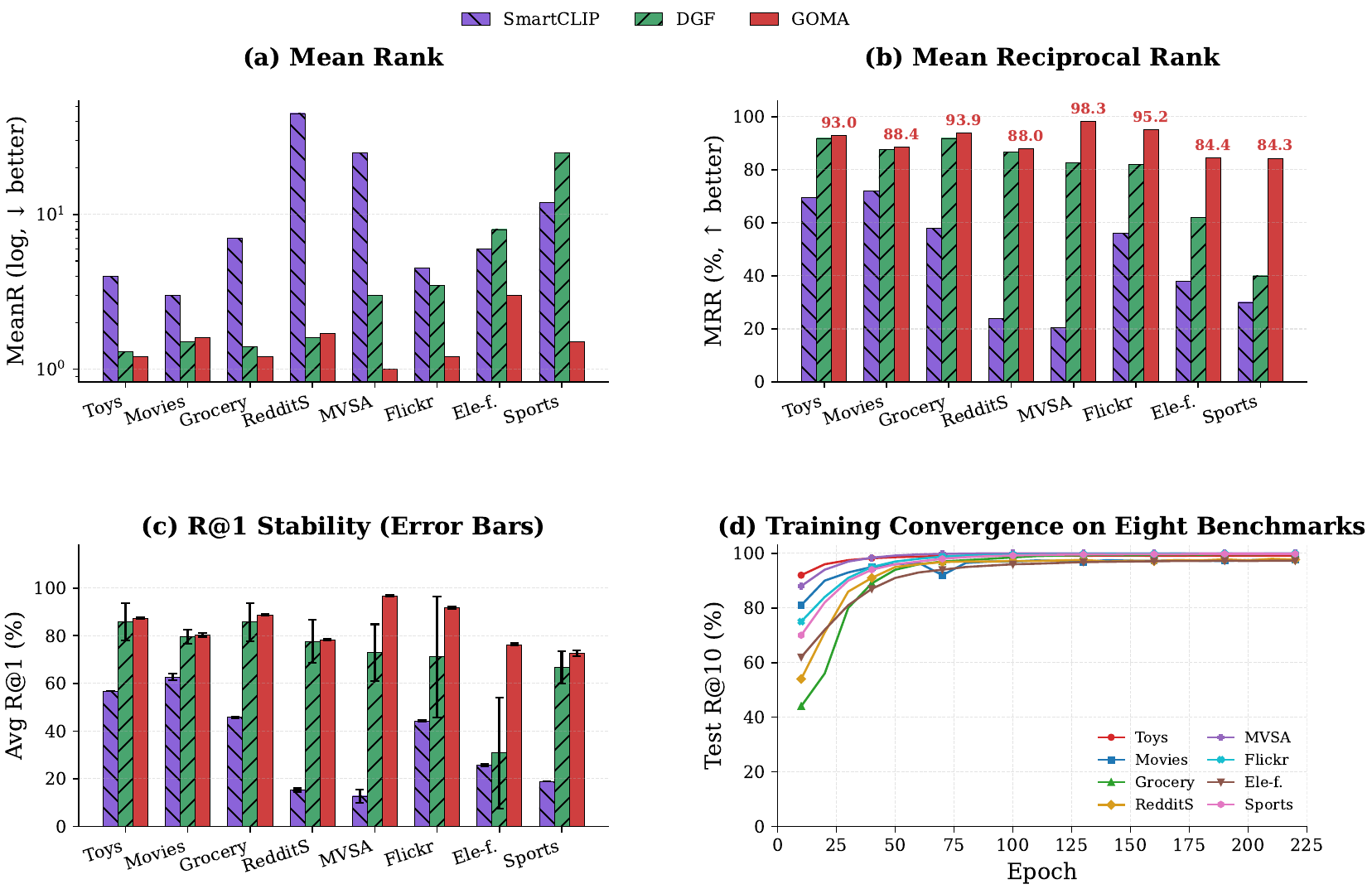}
    \caption{\textbf{In-depth analysis.} Rank quality, seed stability, and training convergence against the strongest pairwise image-text and graph baselines.}
    \label{fig:indepth_why_goma}
\end{figure*}

\subsection{Hyperparameter Sensitivity}
\label{appendix:sensitivity}

Figure~\ref{fig:hyperparam_sensitivity} probes the sensitivity of \ours{} to propagation depth $K$, cross-modal coupling $\beta$, and restart coefficient $\alpha$ on Grocery using R@10.
The curves stay near saturation across moderate $K$ and $\alpha$ values, while removing cross-modal coupling or pushing it too high degrades performance.
This behavior matches the intended role of finite-step smoothing with restart and shows that \ours{} does not rely on a fragile single hyperparameter point.

\begin{figure*}[t]
    \centering
    \includegraphics[width=0.98\textwidth]{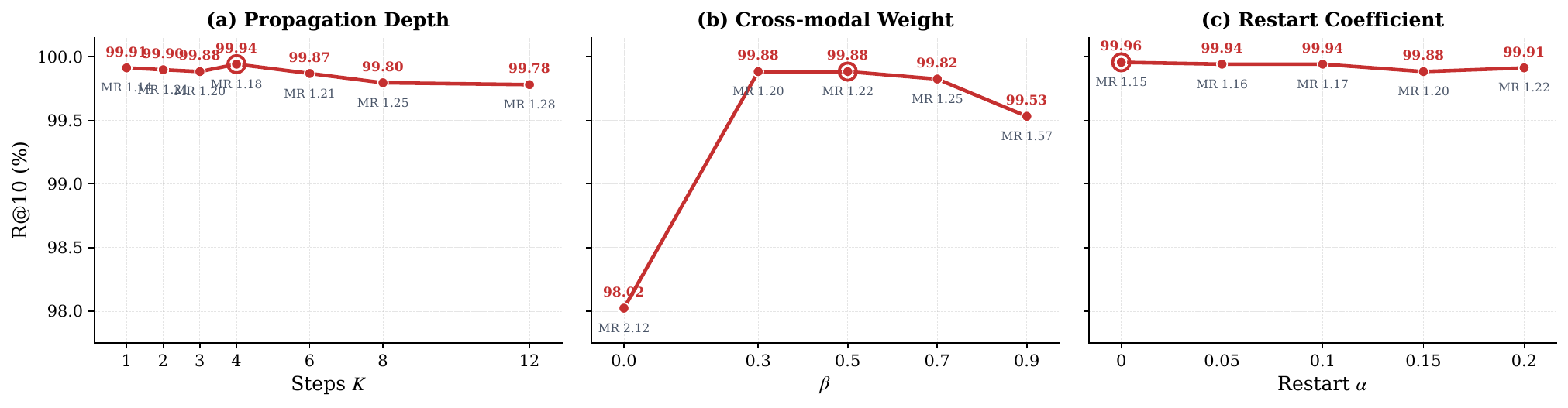}
    \caption{\textbf{Hyperparameter sensitivity on Grocery.} R@10 sweeps over propagation depth, cross-modal coupling, and restart strength. MeanR is annotated at each point.}
    \label{fig:hyperparam_sensitivity}
\end{figure*}

\begin{figure*}[t]
    \centering
    \includegraphics[width=0.95\textwidth]{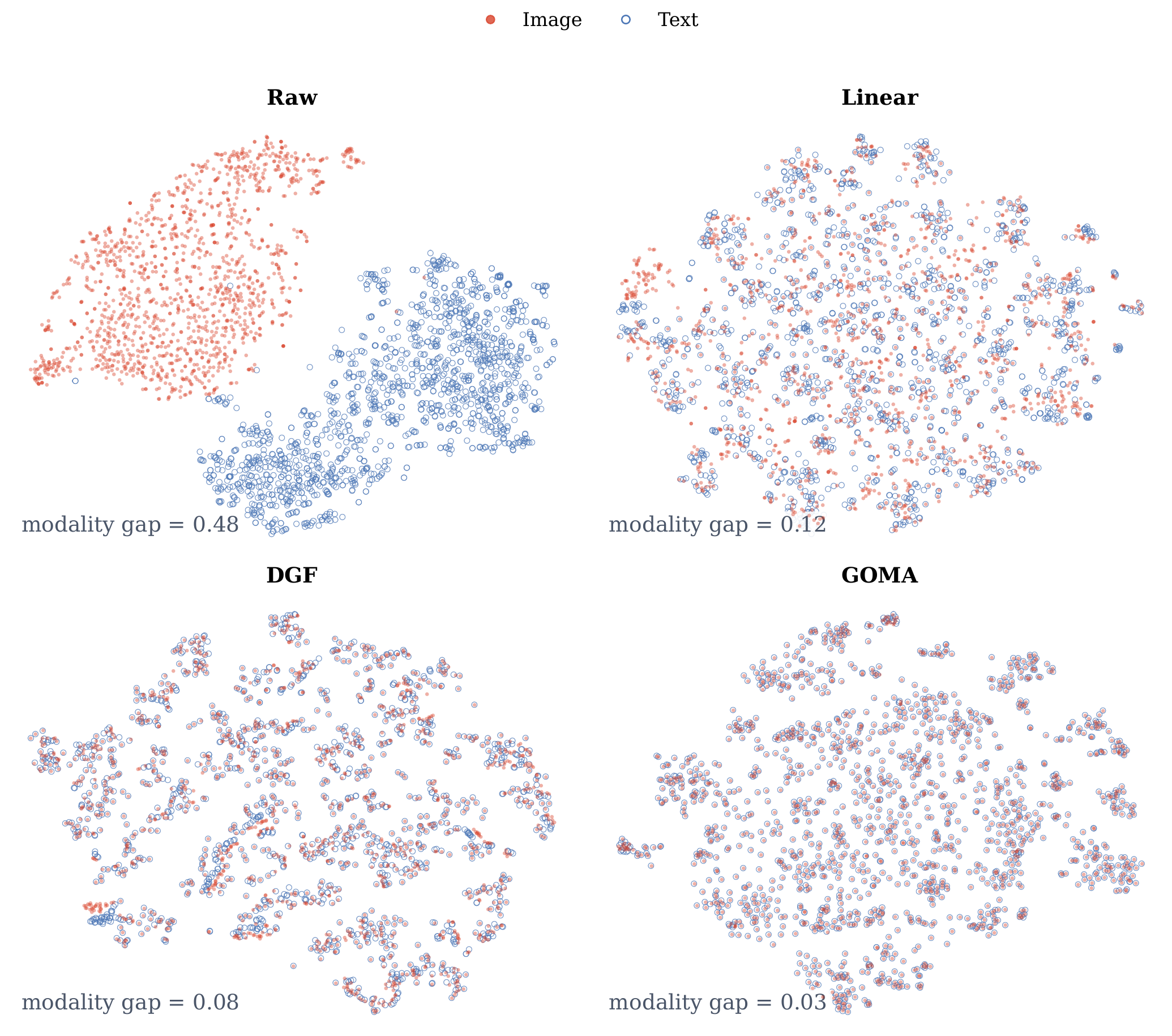}
    \caption{\textbf{t-SNE visualization on Grocery (referred in \S\ref{sec:main_results}).} \textcolor{darkred}{Red} = image, \textcolor{royalblue}{blue} = text. DGF retains local modality offsets (gap = 0.08), while \ours{} nearly overlaps paired neighborhoods (gap = 0.03).}
    \label{fig:tsne}
\end{figure*}

\section{Experimental Details}
\label{appendix:details}

This section records the shared implementation choices that are common across the seven-dataset protocol: dataset statistics, baseline definitions, optimization, candidate-edge construction, and reproducibility information.
All experiments are conducted with frozen pretrained multimodal features and a lightweight trainable post-alignment module. The backbone remains fixed throughout training.
Unless otherwise stated, the reported numbers come from the best validation checkpoints under the standard held-out split protocol.

\subsection{Datasets and Splits}
\label{appendix:dataset_details}

Table~\ref{tab:dataset_stats} follows the same seven-dataset order as the main results table.
Besides graph size, we report the released frozen feature dimensions and split sizes because they directly affect whether the Raw stage is meaningful and how the protocol is instantiated.

\begin{table}[t]
\centering
\caption{\textbf{Dataset description and statistics.} All datasets follow the seven-dataset protocol.}
\label{tab:dataset_stats}
\resizebox{0.98\linewidth}{!}{%
\begin{tabular}{lrrrrrrr}
\toprule
\rowcolor{gray!80}
\tabhead{Dataset} &
\tabhead{\# Nodes} &
\tabhead{\# Edges} &
\tabhead{Img Dim} &
\tabhead{Txt Dim} &
\tabhead{Train} &
\tabhead{Val} &
\tabhead{Test} \\
\midrule
\rowcolor{gray!10}
Toys    & 20,695 & 113,402 & 3584 & 3584 & 12,417 & 4,139 & 4,139 \\
Movies  & 16,672 & 160,802 & 3584 & 3584 & 10,003 & 3,334 & 3,335 \\
\rowcolor{gray!10}
Grocery & 17,074 & 142,262 & 3584 & 3584 & 10,244 & 3,414 & 3,416 \\
RedditS & 15,894 & 283,080 & 3584 & 3584 & 9,536 & 3,178 & 3,180 \\
\rowcolor{gray!10}
Flickr30k & 31,783 & 181,151 & 3584 & 3584 & 19,069 & 6,356 & 6,358 \\
Ele-fashion & 97,766 & 399,172 & 3584 & 3584 & 58,660 & 19,553 & 19,553 \\
\rowcolor{gray!10}
Sports  & 50,250 & 603,696 & 3584 & 3584 & 30,150 & 10,050 & 10,050 \\
\bottomrule
\end{tabular}}
\end{table}

\textbf{What these datasets represent.}
\emph{Toys} and \emph{Movies} are the cleaner product graphs, where frozen pairwise retrieval is already fairly strong and graph refinement mainly sharpens the final retrieval space.
\emph{Grocery} is the medium-noise product graph, where visually similar but semantically distinct items make graph-side smoothing more valuable.
\emph{RedditS} is the heaviest-noise graph, built from user-generated social content with substantially noisier local evidence.
\emph{Flickr30k} is a captioned image graph where retrieval remains challenging relative to the cleaner product graphs.
\emph{Ele-fashion} and \emph{Sports} are larger product graphs that extend evaluation to broader product domains and higher graph scales than the four core datasets.

\subsection{Baselines, Optimization, and Reproducibility}
\label{appendix:baseline_details}

\textbf{\emph{Raw}.}
This baseline uses the frozen multimodal features as they are, without any trainable post-alignment module.

\textbf{\emph{Linear}.}
This baseline adds only small modality-specific linear projections without graph refinement.

\textbf{Pairwise image-text retrieval baselines.}
The main table includes a small protocol-aligned image-text retrieval block.
It reports \emph{VSE++}~\cite{faghri2017vse++}, \emph{D2S-VSE}~\cite{D2SVSE2025}, \emph{CLIP-Refine}~\cite{ClipRefine2025}, and \emph{SmartCLIP}~\cite{SmartCLIP2025} under the same frozen-feature paired-node protocol used by all methods.
All \emph{VSE++} cells report protocol-aligned mini-batch pairwise training runs across all seven datasets.

\textbf{Multimodal graph retrieval and recent MAG baselines.}
We further compare against \emph{GSMN}, \emph{MMGCN}, \emph{MGAT}, and \emph{LGMRec}~\cite{gsmn,MMGCN,MGAT,LGMRec}, together with recent MAG methods \emph{DGF} and \emph{DMGC}~\cite{DGF,DMGC}, using the same protocol-aligned evaluation.
These rows test whether stronger graph-side modeling alone is already sufficient under the protocol.

\textbf{Optimization.}
All methods are selected on validation nodes under the same fixed split protocol.
For \ours{}, Adam uses learning rate from $\{5\times 10^{-4}, 10^{-3}, 2\times 10^{-3}\}$, batch size 512 or 768, and retrieval temperature from $\{0.05,0.07,0.1\}$.
Regularization weights are selected from a small shared grid, with the main hybrid settings typically using $(\lambda_c,\lambda_t)\in\{(0.1,0.05),(0.05,0.03)\}$; the direct-branch weight $\lambda_d$ is selected from $\{0.0,0.3,0.5\}$.
Checkpoint selection uses the validation metric defined before training: weighted recall for Toys, Grocery, RedditS, and Flickr30k, and R@10 for the remaining datasets.

\textbf{Candidate-edge construction.}
Candidate-edge construction follows two modes depending on the dataset.
In the hybrid settings (Toys, Movies, Grocery, RedditS, Flickr30k), \ours{} builds visual, textual, and cross-modal candidate edges by combining structural MAG edges with sparse $k$-NN neighborhoods from frozen image-image, text-text, and image-text similarities, following the graph structure learning intuition that observed topology and feature-neighbor completion should complement each other~\cite{gsl_survey}.
In the structure-only settings (Ele-fashion, Sports), it uses observed graph edges as candidate routes for all propagation channels, with intra-modal self-loops.
In these datasets, hybrid $k$NN candidates were built but omitted to keep training-time memory within GPU limits. The modality-aware operators learn different propagation weights on the shared structure-edge set. While this limits the degree of modality specificity compared to the full hybrid setting, separate channel weightings still allow visual, textual, and cross-modal routes to differ within the same candidate topology.
The cross-modal candidate graphs exclude self-pair edges throughout.

\textbf{Reproducibility.}
All results in Table~\ref{tab:main_results} are averaged over three training seeds (43/44/45) with fixed split seed~43.
For each method and dataset, we report the mean on the first line and the standard deviation on the second line.

\section{Algorithm and Complexity}
\label{appendix:algorithm}

\begin{algorithm}[t]
\caption{\ours{} for graph-enhanced multimodal retrieval.}
\label{alg:goma}
\SetAlgoLined
\KwIn{Raw image features $\mathbf{X}_v$, Raw text features $\mathbf{X}_t$, structural graph $\mathcal{G}$, train/val/test split, propagation depth $K$, cross-modal weight $\beta$}
\KwOut{Aligned retrieval embeddings $\mathbf{Z}_v$, $\mathbf{Z}_t$}
\tcc{Topology evolution}
Construct visual, textual, and cross-modal candidate edge sets from structural edges and, for datasets where the GPU budget permits, sparse $k$NN feature-neighbor completion on the original frozen features (otherwise use structure-only candidates with shared edge sets)\;
Remove self-pair edges from the candidate cross-modal graphs\;
Project Raw features into the shared space to obtain Linear features $\mathbf{E}_v$, $\mathbf{E}_t$\;
Learn normalized propagation weights $\tilde{\mathbf{P}}_v$, $\tilde{\mathbf{P}}_t$, $\tilde{\mathbf{P}}_{vt}$, and $\tilde{\mathbf{P}}_{tv}$ with edge scorers and row-wise softmax\;
\tcc{Modality evolution}
Initialize $\mathbf{H}_v^{(0)} \leftarrow \mathbf{E}_v$, $\mathbf{H}_t^{(0)} \leftarrow \mathbf{E}_t$\;
\For{$k=1$ \KwTo $K$}{
    $\mathbf{H}_v^{(k)} \leftarrow (1-\beta)\tilde{\mathbf{P}}_v \mathbf{H}_v^{(k-1)} + \beta \tilde{\mathbf{P}}_{vt}\mathbf{H}_t^{(k-1)}$\;
    $\mathbf{H}_t^{(k)} \leftarrow (1-\beta)\tilde{\mathbf{P}}_t \mathbf{H}_t^{(k-1)} + \beta \tilde{\mathbf{P}}_{tv}\mathbf{H}_v^{(k-1)}$\;
    Apply restart mixing with the Linear features\;
    Normalize the propagated states and append them to the trajectory\;
}
\tcc{Representation evolution and optimization}
Apply adaptive residual aggregation and final residual fusion to obtain $\mathbf{Z}_v$, $\mathbf{Z}_t$\;
Optimize train nodes with $\mathcal{L}_{\mathrm{align}}$, $\mathcal{L}_{\mathrm{lin}}$, $\mathcal{L}_{\mathrm{CDE}}$, and $\mathcal{L}_{\mathrm{Topo}}$ as defined in Section~\ref{sec:objective}; select checkpoints on validation nodes; report on test nodes\;
\end{algorithm}

\subsection{Complexity Analysis}
\label{appendix:complexity}

\textbf{Scope.}
This paper is primarily methodological and empirical.
The graph signal smoothing viewpoint serves mainly as the conceptual basis for our design: topology quality controls the signal-to-noise ratio of message passing, finite-step propagation suppresses noisy directions before collapse dominates, and adaptive aggregation prevents overly deep smoothing from destroying retrieval discriminability.
Within that scope, the main formal statement we rely on is the sparse-refinement complexity of the resulting pipeline.

\begin{theorem}[Sparse refinement complexity]
\label{thm:sparse_complexity}
Assume that the candidate edge sets $\mathcal{E}_v$, $\mathcal{E}_t$, $\mathcal{E}_{vt}$, and $\mathcal{E}_{tv}$ defined in Section~\ref{sec:topology} have already been constructed and cached.
Let $N$ be the number of nodes, $d$ the hidden dimension, $d_a$ the trajectory-attention width, and $K$ the propagation depth.
Then one forward pass of \ours{} has time complexity
\[
\mathcal{O}\!\big(K(|\mathcal{E}_v|+|\mathcal{E}_t|+|\mathcal{E}_{vt}|+|\mathcal{E}_{tv}|)d + N K d d_a\big),
\]
and additional memory complexity
\[
\mathcal{O}\!\big(|\mathcal{E}_v|+|\mathcal{E}_t|+|\mathcal{E}_{vt}|+|\mathcal{E}_{tv}| + N K d\big),
\]
where the $N K d$ term covers the smoothing trajectory for one modality. Storing both image and text trajectories doubles the constant but not the asymptotic order.
\end{theorem}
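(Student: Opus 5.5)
The plan is to account for the cost of one forward pass stage by stage, following Algorithm~\ref{alg:goma}. For each stage I bound the arithmetic and the extra storage it needs, and then sum the bounds. The frozen features are treated as given. The linear adapters $\mathbf{e}_{i,m}=\mathrm{Norm}(\mathbf{W}_m\mathbf{x}_{i,m}+\mathbf{b}_m)$ either cost $\mathcal{O}(NF_m d)$, which is a one-off term independent of $K$ and the edges, or are the identity. I therefore treat the adapted embeddings $\mathbf{E}_v,\mathbf{E}_t$ as the input, consistent with the statement's assumption that all cached or preprocessed quantities are excluded. The key convention I will make explicit is that the trajectory is stored as $K{+}1$ matrices of size $N\times d$, and that every sparse operator is represented in edge-list (CSR) form with one scalar weight per candidate edge.

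\textbf{Stage 1 (operators).} For each channel $m\in\{v,t,vt,tv\}$ and each edge $(i,j)\in\mathcal{E}_m$, computing $\ell^m_{ij}$ costs one Hadamard product of size $d$, a matrix--vector product $\mathbf{W}^{\mathrm{edge}}_m\in\mathbb{R}^{h\times d}$, and a dot product with $\mathbf{a}_m$. This is $\mathcal{O}(hd)$ per edge. Taking the edge-scorer width $h$ as a constant, or as at most $d_a$, gives $\mathcal{O}(|\mathcal{E}_m|d)$ per channel. The row-wise softmax in \eqref{eq:topology_weights} is a segmented sum over CSR rows and costs $\mathcal{O}(|\mathcal{E}_m|)$. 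Storing the logits and weights costs $\mathcal{O}(|\mathcal{E}_m|)$ memory. This stage is dominated by the $K$-multiplied propagation term whenever $K\ge 1$.

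\textbf{Stage 2 (coupled smoothing).} At each step $k$, \eqref{eq:coupled_smoothing} requires four sparse--dense products $\tilde{\mathbf{P}}_c\mathbf{H}$ with $\mathbf{H}\in\mathbb{R}^{N\times d}$. Each costs $\mathcal{O}(|\mathcal{E}_c|d)$, because every stored nonzero contributes one axpy of length $d$. The convex mixing with $\beta$, the restart $(1-\alpha)\mathbf{H}+\alpha\mathbf{E}$, and the row normalisation add $\mathcal{O}(Nd)$ per step. This is absorbed into the edge term because every row contains a self-loop in $\mathcal{E}_v,\mathcal{E}_t$, so $|\mathcal{E}_v|\ge N$. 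Summing over $K$ steps gives $\mathcal{O}(K\sum_c|\mathcal{E}_c|d)$. Keeping every $\mathbf{H}^{(k)}_m$ for the readout gives memory $\mathcal{O}(NKd)$ per modality.

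\textbf{Stage 3 (readout).} For each node, modality and depth, \eqref{eq:agg_score} computes $\mathbf{W}_m\mathbf{h}^{(k)}_{i,m}$ at cost $\mathcal{O}(dd_a)$. The term $\mathbf{U}_m\bar{\mathbf{h}}_{i,m}$ is computed once per node, after the $\mathcal{O}(Kd)$ mean, so it costs only $\mathcal{O}(Ndd_a)$ in total. The $\tanh$, the $\mathbf{q}_m$ product and the softmax over $K{+}1$ entries cost $\mathcal{O}(NKd_a)$. The weighted sum in \eqref{eq:agg_readout} together with the normalisation costs $\mathcal{O}(NKd)$. The total is $\mathcal{O}(NKdd_a)$, with $\mathcal{O}(NKd_a)\subseteq\mathcal{O}(NKd)$ extra memory for the scores.

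Adding the three stages and using $K\ge1$ yields both stated bounds. The factor of two from handling the two modalities only changes constants.

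The main obstacle is not the arithmetic. It is pinning down the implicit assumptions that make the sum collapse to exactly the stated form. Three points need to be argued explicitly. First, the scorer width $h$ must be treated as $\mathcal{O}(1)$ or absorbed, since otherwise Stage~1 contributes $\mathcal{O}(h\sum|\mathcal{E}_c|d)$. Second, $N=\mathcal{O}(|\mathcal{E}_v|)$ must hold, which follows from the self-loops $\mathcal{I}$ and also holds in the structure-only setting, which adds intra-modal self-loops. Third, the memory claim counts only storage beyond the inputs and parameters, and the contrastive loss, which uses $B\times G$ similarity blocks, is excluded because the theorem concerns the forward refinement pass. I will state these three conventions up front, and the remaining steps are then routine bookkeeping.
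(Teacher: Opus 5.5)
Your proposal is correct and follows essentially the same stage-by-stage accounting as the paper's proof. The paper likewise bounds edge scoring and the row-softmax as linear in the cached candidate edges, counts $K$ sparse--dense products at $\mathcal{O}(|\mathcal{E}_c|d)$ each, charges $d\times d_a$ trajectory projections at $\mathcal{O}(NKdd_a)$, and stores $\mathcal{O}(\sum_c|\mathcal{E}_c|)+\mathcal{O}(NKd)$; you additionally make its implicit conventions explicit ($h=\mathcal{O}(1)$, $|\mathcal{E}_v|\ge N$ via self-loops, CSR storage, loss terms excluded). One small slip: the alternative ``$h$ at most $d_a$'' gives $\mathcal{O}(|\mathcal{E}_m|dd_a)$ rather than $\mathcal{O}(|\mathcal{E}_m|d)$, and that is not dominated by the stated bound in general, so keep only the $h=\mathcal{O}(1)$ convention, which is what the paper tacitly assumes when it calls the scorer cost linear in the edge count.
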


\begin{proof}
The learned edge scorers and row-wise normalizations operate only on the cached candidate edges, so their cost is linear in $|\mathcal{E}_v|+|\mathcal{E}_t|+|\mathcal{E}_{vt}|+|\mathcal{E}_{tv}|$.
Each smoothing step in \eqref{eq:coupled_smoothing} is therefore a sparse matrix--dense matrix multiplication over these candidate edges, yielding
$\mathcal{O}(K(|\mathcal{E}_v|+|\mathcal{E}_t|+|\mathcal{E}_{vt}|+|\mathcal{E}_{tv}|)d)$ across $K$ steps.
The trajectory aggregator in \eqref{eq:agg_score}--\eqref{eq:agg_readout} applies lightweight $d\!\times\! d_a$ projections to every node at every step, which contributes $\mathcal{O}(N K d d_a)$.
Storing the sparse candidate graphs requires $\mathcal{O}(|\mathcal{E}_v|+|\mathcal{E}_t|+|\mathcal{E}_{vt}|+|\mathcal{E}_{tv}|)$ memory, while storing the full smoothing trajectory requires $\mathcal{O}(N K d)$ memory per modality (image and text trajectories are stored separately).
Combining these terms gives the stated result.
\end{proof}

\textbf{Complexity discussion.}
Theorem~\ref{thm:sparse_complexity} summarizes the steady-state cost after candidate reuse.
The one-off candidate-edge construction stage is more expensive because it combines structural edges with chunked dense similarity search for sparse $k$NN feature-neighbor completion.
In the worst case, this precomputation scales as $\mathcal{O}(N^2 d)$.
After the candidate graph is fixed, the coupled smoothing stage scales as
\begin{equation}
    \mathcal{O}\!\big(K (|\mathcal{E}_v| + |\mathcal{E}_t| + |\mathcal{E}_{vt}| + |\mathcal{E}_{tv}|) d\big).
\end{equation}
The adaptive residual aggregation adds $\mathcal{O}(N K d d_a)$ because it applies lightweight trajectory-attention projections on each step of the smoothing trace.
Hence the dominant steady-state overhead relative to the Linear baseline remains sparse graph refinement rather than any change to the frozen backbone itself.

\section{Proof of Theoretical Analysis}
\label{appendix:proofs}

We provide the complete proofs for Theorems~\ref{thm:anti_collapse} and~\ref{thm:gap_contraction} stated in Section~\ref{sec:theory}.

\subsection{Proof of Theorem~\ref{thm:anti_collapse} (Anti-Collapse Guarantee)}

\begin{proof}
Recall the joint dynamics $\mathbf{H}^{(k)} = (1{-}\alpha)\mathcal{M}\mathbf{H}^{(k-1)} + \alpha\mathbf{E}$ with row-stochastic $\mathcal{M}$. Since each block $\tilde{\mathbf{P}}_v,\tilde{\mathbf{P}}_t,\tilde{\mathbf{P}}_{vt},\tilde{\mathbf{P}}_{tv}$ is row-stochastic by construction (Eq.~\ref{eq:topology_weights}), the block matrix $\mathcal{M}$ is also row-stochastic, hence $\rho(\mathcal{M}) = 1$.

For $\alpha \in (0,1]$, we have $\rho((1{-}\alpha)\mathcal{M}) = 1 - \alpha < 1$. By the Banach fixed-point theorem, the linear recurrence is a strict contraction and converges to a unique stationary point. Unrolling the recurrence:
\begin{equation}
\mathbf{H}^{(k)} = ((1{-}\alpha)\mathcal{M})^k\mathbf{H}^{(0)} + \alpha\sum_{j=0}^{k-1}(1{-}\alpha)^j\mathcal{M}^j\mathbf{E}.
\end{equation}
Taking $k \to \infty$, the first term vanishes since $\|((1{-}\alpha)\mathcal{M})^k\| \leq (1{-}\alpha)^k \to 0$, and the Neumann series converges:
\begin{equation}
\mathbf{H}^{(\infty)} = \alpha\sum_{k=0}^{\infty}(1{-}\alpha)^k\mathcal{M}^k\mathbf{E} = \alpha(\mathbf{I} - (1{-}\alpha)\mathcal{M})^{-1}\mathbf{E}.
\end{equation}

Now consider the degenerate case $\alpha = 0$ (no restart). Since $\mathcal{M}$ is row-stochastic and irreducible under the learned topology, $\lim_{k\to\infty}\mathcal{M}^k = \mathbf{1}\boldsymbol{\pi}^{\top}$ where $\boldsymbol{\pi}$ is the stationary distribution. Then $\mathbf{H}^{(k)} \to \mathbf{1}\boldsymbol{\pi}^{\top}\mathbf{E}$: every node collapses to the same global average, erasing all discriminative signal. This is exactly the over-smoothing phenomenon observed in Section~\ref{sec:empirical}.

With $\alpha > 0$, the coefficient $\alpha(1{-}\alpha)^k$ decays geometrically, bounding the effective number of propagation steps to approximately $1/\alpha$. The stationary point $\mathbf{H}^{(\infty)}$ is a convex combination of contributions from all powers $\mathcal{M}^k\mathbf{E}$, with exponentially decreasing weight on deeper terms. Crucially, $\mathbf{H}^{(\infty)}$ is strictly bounded away from the collapsed state $\mathbf{1}\boldsymbol{\pi}^{\top}\mathbf{E}$ because the $\alpha\mathbf{E}$ term (the $k{=}0$ component of the Neumann series) directly preserves the initial discriminative structure of the frozen embeddings.
\end{proof}

\subsection{Proof of Theorem~\ref{thm:gap_contraction} (Cross-Modal Gap Contraction)}

\begin{proof}
Consider the coupled smoothing without restart ($\alpha{=}0$). From Eq.~\ref{eq:joint_dynamics} with $\alpha{=}0$:
\begin{equation}
\begin{bmatrix}\mathbf{H}_v^{(k)} \\ \mathbf{H}_t^{(k)}\end{bmatrix}
= \begin{bmatrix} (1{-}\beta)\tilde{\mathbf{P}}_v & \beta\tilde{\mathbf{P}}_{vt} \\ \beta\tilde{\mathbf{P}}_{tv} & (1{-}\beta)\tilde{\mathbf{P}}_t \end{bmatrix}
\begin{bmatrix}\mathbf{H}_v^{(k-1)} \\ \mathbf{H}_t^{(k-1)}\end{bmatrix}.
\end{equation}

Define the cross-modal discrepancy $\boldsymbol{\Delta}^{(k)} = \mathbf{H}_v^{(k)} - \mathbf{H}_t^{(k)}$. Subtracting the two block-rows:
\begin{equation}
\label{eq:delta_recurrence}
\boldsymbol{\Delta}^{(k)} = \big((1{-}\beta)\tilde{\mathbf{P}}_v - \beta\tilde{\mathbf{P}}_{tv}\big)\mathbf{H}_v^{(k-1)} - \big((1{-}\beta)\tilde{\mathbf{P}}_t - \beta\tilde{\mathbf{P}}_{vt}\big)\mathbf{H}_t^{(k-1)}.
\end{equation}

For a tractable bound, consider the simplified symmetric setting where intra-modal operators are bounded by $\mathbf{P}_{\text{intra}}$ and cross-modal operators by $\mathbf{P}_{\text{cross}}$ (both row-stochastic with unit spectral norm). The recurrence simplifies to:
\begin{equation}
\boldsymbol{\Delta}^{(k)} = ((1{-}\beta)\mathbf{P}_{\text{intra}} - \beta\mathbf{P}_{\text{cross}})\,\boldsymbol{\Delta}^{(k-1)}.
\end{equation}

Taking the Frobenius norm and applying sub-multiplicativity:
\begin{equation}
\|\boldsymbol{\Delta}^{(k)}\|_F \leq \|(1{-}\beta)\mathbf{P}_{\text{intra}} - \beta\mathbf{P}_{\text{cross}}\|_2 \;\|\boldsymbol{\Delta}^{(k-1)}\|_F.
\end{equation}

By the triangle inequality and $\|\mathbf{P}_{\text{intra}}\|_2 = \|\mathbf{P}_{\text{cross}}\|_2 = 1$:
\begin{equation}
\|(1{-}\beta)\mathbf{P}_{\text{intra}} - \beta\mathbf{P}_{\text{cross}}\|_2 \leq (1{-}\beta)\|\mathbf{P}_{\text{intra}}\|_2 + \beta\|\mathbf{P}_{\text{cross}}\|_2 = 1.
\end{equation}

For $\beta \in (0, 0.5)$, and under the condition that $\mathbf{P}_{\text{intra}}$ and $\mathbf{P}_{\text{cross}}$ are non-identical (guaranteed by the topology mismatch documented in L1, Section~\ref{sec:empirical}), the inequality is strict:
\begin{equation}
\|(1{-}\beta)\mathbf{P}_{\text{intra}} - \beta\mathbf{P}_{\text{cross}}\|_2 < 1.
\end{equation}

This follows because if equality held, the operators would share identical dominant eigenspaces, contradicting the empirically observed low overlap between visual and textual $k$NN topologies (Section~\ref{sec:empirical}).

Therefore $\|\boldsymbol{\Delta}^{(k)}\|_F < \|\boldsymbol{\Delta}^{(k-1)}\|_F$, proving strict contraction of the cross-modal gap at each coupled smoothing step. The contraction provides a rigorous basis for the dominant role of cross-modal propagation in the ablation (Table~\ref{tab:core_ablation}): cross-modal evidence exchange systematically reduces the image-text semantic distance before intra-modal smoothing alone would reach over-smoothing.
\end{proof}

\section{Limitations \& Broader Impact}
\label{appendix:limitations}

\textbf{Limitations.}
First, \ours{} is designed for MAG retrieval where the candidate corpus forms an observed relational graph, rather than strict online query-only serving.
Although cross-modal self-pair edges are removed, the method still assumes that the query node belongs to the graph corpus and can benefit from graph context at encoding time.
It is not intended to replace online open-gallery image-text retrieval systems where each query arrives without graph context; extending \ours{} to stricter inductive or streaming scenarios is an important future direction.
Second, the present implementation is substantially more memory-intensive than simple projection baselines.
The memory gap comes from three design decisions, each of which admits a known lighter alternative: (i)~the full-graph contrastive loss pools negatives from all $N$ corpus nodes rather than using in-batch or queue-based negatives~\cite{clip,faghri2017vse++}; (ii)~the coupled smoothing trace stores per-step hidden states for both modalities ($2(K{+}1)Nd$ floats) to enable per-node adaptive depth selection, which a fixed-depth readout or lightweight trajectory summary would reduce to a single hidden state; and (iii)~the topology-regularization terms (CDE and topology contrast) evaluate over all candidate edges, though chunked computation already bounds their per-step memory.
These three factors together account for the training-memory footprint.
Replacing any of them with the corresponding lighter alternative would reduce memory with an accuracy trade-off. Combining all three is a natural engineering path toward deployment-scale graphs that does not alter the core three-stage design.
Third, the best operating point is dataset-dependent: cleaner graphs often prefer deeper hybrid smoothing, whereas larger or noisier graphs require more conservative topology and shallower propagation.
Fourth, \ours{} assumes that every node carries both image and text modalities, which holds for the standard MAG benchmarks considered here but may not hold in real-world graphs with modality-missing nodes; extending the propagation logic to handle partial modality presence is left to future work.
Fifth, the method is not equally effective on every multimodal corpus.
In particular, pair-dominant datasets with weak graph structure or weak frozen-feature retrieval signal are harder for the present graph-refinement recipe.
Finally, while our main internal comparisons are strong and clean, a larger protocol-aligned external baseline suite would further strengthen the empirical picture, especially if it includes additional global pair-only image-text matching methods such as VSRN and PCME together with newer alignment methods beyond D2S-VSE, CLIP-Refine, and SmartCLIP~\cite{li2019visual,chun2021probabilistic,D2SVSE2025,ClipRefine2025,SmartCLIP2025}.

\textbf{Broader impact.}
This work may benefit multimodal retrieval systems in recommendation, product search, and multimodal knowledge discovery.
At the same time, the model may inherit biases from the frozen multimodal backbone and from the graph structure itself, so future deployments should include standard dataset auditing and fairness checks.

\newpage
\section*{NeurIPS Paper Checklist}

\begin{enumerate}

\item {\bf Claims}
    \item[] Question: Do the main claims made in the abstract and introduction accurately reflect the paper's contributions and scope?
    \item[] Answer: \answerYes{}
    \item[] Justification: The paper clearly states the target setting, the method scope, and the protocol boundary in Sections~\ref{sec:intro}, \ref{sec:problem}, and \ref{sec:setup}.
    \item[] Guidelines:
    \begin{itemize}
        \item The answer \answerNA{} means that the abstract and introduction do not include the claims made in the paper.
        \item The abstract and/or introduction should clearly state the claims made, including the contributions made in the paper and important assumptions and limitations. A \answerNo{} or \answerNA{} answer to this question will not be perceived well by the reviewers. 
        \item The claims made should match theoretical and experimental results, and reflect how much the results can be expected to generalize to other settings. 
        \item It is fine to include aspirational goals as motivation as long as it is clear that these goals are not attained by the paper. 
    \end{itemize}

\item {\bf Limitations}
    \item[] Question: Does the paper discuss the limitations of the work performed by the authors?
    \item[] Answer: \answerYes{}
    \item[] Justification: Discussed in Appendix~\ref{appendix:limitations}.
    \item[] Guidelines:
    \begin{itemize}
        \item The answer \answerNA{} means that the paper has no limitation while the answer \answerNo{} means that the paper has limitations, but those are not discussed in the paper. 
        \item The authors are encouraged to create a separate ``Limitations'' section in their paper.
        \item The paper should point out any strong assumptions and how robust the results are to violations of these assumptions (e.g., independence assumptions, noiseless settings, model well-specification, asymptotic approximations only holding locally). The authors should reflect on how these assumptions might be violated in practice and what the implications would be.
        \item The authors should reflect on the scope of the claims made, e.g., if the approach was only tested on a few datasets or with a few runs. In general, empirical results often depend on implicit assumptions, which should be articulated.
        \item The authors should reflect on the factors that influence the performance of the approach. For example, a facial recognition algorithm may perform poorly when image resolution is low or images are taken in low lighting. Or a speech-to-text system might not be used reliably to provide closed captions for online lectures because it fails to handle technical jargon.
        \item The authors should discuss the computational efficiency of the proposed algorithms and how they scale with dataset size.
        \item If applicable, the authors should discuss possible limitations of their approach to address problems of privacy and fairness.
        \item While the authors might fear that complete honesty about limitations might be used by reviewers as grounds for rejection, a worse outcome might be that reviewers discover limitations that aren't acknowledged in the paper. The authors should use their best judgment and recognize that individual actions in favor of transparency play an important role in developing norms that preserve the integrity of the community. Reviewers will be specifically instructed to not penalize honesty concerning limitations.
    \end{itemize}

\item {\bf Theory assumptions and proofs}
    \item[] Question: For each theoretical result, does the paper provide the full set of assumptions and a complete (and correct) proof?
    \item[] Answer: \answerYes{}
    \item[] Justification: Theorem~\ref{thm:sparse_complexity} states the sparse-refinement complexity with a complete proof in Appendix~\ref{appendix:complexity}.
    \item[] Guidelines:
    \begin{itemize}
        \item The answer \answerNA{} means that the paper does not include theoretical results. 
        \item All the theorems, formulas, and proofs in the paper should be numbered and cross-referenced.
        \item All assumptions should be clearly stated or referenced in the statement of any theorems.
        \item The proofs can either appear in the main paper or the supplemental material, but if they appear in the supplemental material, the authors are encouraged to provide a short proof sketch to provide intuition. 
        \item Inversely, any informal proof provided in the core of the paper should be complemented by formal proofs provided in appendix or supplemental material.
        \item Theorems and Lemmas that the proof relies upon should be properly referenced. 
    \end{itemize}

\item {\bf Experimental result reproducibility}
    \item[] Question: Does the paper fully disclose all the information needed to reproduce the main experimental results of the paper to the extent that it affects the main claims and/or conclusions of the paper (regardless of whether the code and data are provided or not)?
    \item[] Answer: \answerYes{}
    \item[] Justification: The main protocol, optimization details, and result sources are summarized in Appendix~\ref{appendix:details}.
    \item[] Guidelines:
    \begin{itemize}
        \item The answer \answerNA{} means that the paper does not include experiments.
        \item If the paper includes experiments, a \answerNo{} answer to this question will not be perceived well by the reviewers: Making the paper reproducible is important, regardless of whether the code and data are provided or not.
        \item If the contribution is a dataset and\slash or model, the authors should describe the steps taken to make their results reproducible or verifiable. 
        \item Depending on the contribution, reproducibility can be accomplished in various ways. For example, if the contribution is a novel architecture, describing the architecture fully might suffice, or if the contribution is a specific model and empirical evaluation, it may be necessary to either make it possible for others to replicate the model with the same dataset, or provide access to the model. In general. releasing code and data is often one good way to accomplish this, but reproducibility can also be provided via detailed instructions for how to replicate the results, access to a hosted model (e.g., in the case of a large language model), releasing of a model checkpoint, or other means that are appropriate to the research performed.
        \item While NeurIPS does not require releasing code, the conference does require all submissions to provide some reasonable avenue for reproducibility, which may depend on the nature of the contribution. For example
        \begin{enumerate}
            \item If the contribution is primarily a new algorithm, the paper should make it clear how to reproduce that algorithm.
            \item If the contribution is primarily a new model architecture, the paper should describe the architecture clearly and fully.
            \item If the contribution is a new model (e.g., a large language model), then there should either be a way to access this model for reproducing the results or a way to reproduce the model (e.g., with an open-source dataset or instructions for how to construct the dataset).
            \item We recognize that reproducibility may be tricky in some cases, in which case authors are welcome to describe the particular way they provide for reproducibility. In the case of closed-source models, it may be that access to the model is limited in some way (e.g., to registered users), but it should be possible for other researchers to have some path to reproducing or verifying the results.
        \end{enumerate}
    \end{itemize}

\item {\bf Open access to data and code}
    \item[] Question: Does the paper provide open access to the data and code, with sufficient instructions to faithfully reproduce the main experimental results, as described in supplemental material?
    \item[] Answer: \answerYes{}
    \item[] Justification: The datasets are publicly available and the implementation is available at \url{https://anonymous.4open.science/r/GOMA}.
    \item[] Guidelines:
    \begin{itemize}
        \item The answer \answerNA{} means that paper does not include experiments requiring code.
        \item Please see the NeurIPS code and data submission guidelines (\url{https://neurips.cc/public/guides/CodeSubmissionPolicy}) for more details.
        \item While we encourage the release of code and data, we understand that this might not be possible, so \answerNo{} is an acceptable answer. Papers cannot be rejected simply for not including code, unless this is central to the contribution (e.g., for a new open-source benchmark).
        \item The instructions should contain the exact command and environment needed to run to reproduce the results. See the NeurIPS code and data submission guidelines (\url{https://neurips.cc/public/guides/CodeSubmissionPolicy}) for more details.
        \item The authors should provide instructions on data access and preparation, including how to access the raw data, preprocessed data, intermediate data, and generated data, etc.
        \item The authors should provide scripts to reproduce all experimental results for the new proposed method and baselines. If only a subset of experiments are reproducible, they should state which ones are omitted from the script and why.
        \item At submission time, to preserve anonymity, the authors should release anonymized versions (if applicable).
        \item Providing as much information as possible in supplemental material (appended to the paper) is recommended, but including URLs to data and code is permitted.
    \end{itemize}

\item {\bf Experimental setting/details}
    \item[] Question: Does the paper specify all the training and test details (e.g., data splits, hyperparameters, how they were chosen, type of optimizer) necessary to understand the results?
    \item[] Answer: \answerYes{}
    \item[] Justification: Detailed in Section~\ref{sec:setup} and Appendix~\ref{appendix:details}.
    \item[] Guidelines:
    \begin{itemize}
        \item The answer \answerNA{} means that the paper does not include experiments.
        \item The experimental setting should be presented in the core of the paper to a level of detail that is necessary to appreciate the results and make sense of them.
        \item The full details can be provided either with the code, in appendix, or as supplemental material.
    \end{itemize}

\item {\bf Experiment statistical significance}
    \item[] Question: Does the paper report error bars suitably and correctly defined or other appropriate information about the statistical significance of the experiments?
    \item[] Answer: \answerYes{}
    \item[] Justification: Main results report three-seed mean and standard deviation in Table~\ref{tab:main_results}, and Appendix~\ref{appendix:detailed_results} includes a focused stability comparison in Fig.~\ref{fig:indepth_why_goma}.
    \item[] Guidelines:
    \begin{itemize}
        \item The answer \answerNA{} means that the paper does not include experiments.
        \item The authors should answer \answerYes{} if the results are accompanied by error bars, confidence intervals, or statistical significance tests, at least for the experiments that support the main claims of the paper.
        \item The factors of variability that the error bars are capturing should be clearly stated (for example, train/test split, initialization, random drawing of some parameter, or overall run with given experimental conditions).
        \item The method for calculating the error bars should be explained (closed form formula, call to a library function, bootstrap, etc.)
        \item The assumptions made should be given (e.g., Normally distributed errors).
        \item It should be clear whether the error bar is the standard deviation or the standard error of the mean.
        \item It is OK to report 1-sigma error bars, but one should state it. The authors should preferably report a 2-sigma error bar than state that they have a 96\% CI, if the hypothesis of Normality of errors is not verified.
        \item For asymmetric distributions, the authors should be careful not to show in tables or figures symmetric error bars that would yield results that are out of range (e.g., negative error rates).
        \item If error bars are reported in tables or plots, the authors should explain in the text how they were calculated and reference the corresponding figures or tables in the text.
    \end{itemize}

\item {\bf Experiments compute resources}
    \item[] Question: For each experiment, does the paper provide sufficient information on the computer resources (type of compute workers, memory, time of execution) needed to reproduce the experiments?
    \item[] Answer: \answerYes{}
    \item[] Justification: GPU memory, training time, and inference latency are profiled in Section~\ref{sec:efficiency}; Appendix~\ref{appendix:details} summarizes implementation.
    \item[] Guidelines:
    \begin{itemize}
        \item The answer \answerNA{} means that the paper does not include experiments.
        \item The paper should indicate the type of compute workers CPU or GPU, internal cluster, or cloud provider, including relevant memory and storage.
        \item The paper should provide the amount of compute required for each of the individual experimental runs as well as estimate the total compute. 
        \item The paper should disclose whether the full research project required more compute than the experiments reported in the paper (e.g., preliminary or failed experiments that didn't make it into the paper). 
    \end{itemize}
    
\item {\bf Code of ethics}
    \item[] Question: Does the research conducted in the paper conform, in every respect, with the NeurIPS Code of Ethics \url{https://neurips.cc/public/EthicsGuidelines}?
    \item[] Answer: \answerYes{}
    \item[] Justification: The work follows the NeurIPS Code of Ethics.
    \item[] Guidelines:
    \begin{itemize}
        \item The answer \answerNA{} means that the authors have not reviewed the NeurIPS Code of Ethics.
        \item If the authors answer \answerNo, they should explain the special circumstances that require a deviation from the Code of Ethics.
        \item The authors should make sure to preserve anonymity (e.g., if there is a special consideration due to laws or regulations in their jurisdiction).
    \end{itemize}

\item {\bf Broader impacts}
    \item[] Question: Does the paper discuss both potential positive societal impacts and negative societal impacts of the work performed?
    \item[] Answer: \answerYes{}
    \item[] Justification: Discussed in Appendix~\ref{appendix:limitations}.
    \item[] Guidelines:
    \begin{itemize}
        \item The answer \answerNA{} means that there is no societal impact of the work performed.
        \item If the authors answer \answerNA{} or \answerNo, they should explain why their work has no societal impact or why the paper does not address societal impact.
        \item Examples of negative societal impacts include potential malicious or unintended uses (e.g., disinformation, generating fake profiles, surveillance), fairness considerations (e.g., deployment of technologies that could make decisions that unfairly impact specific groups), privacy considerations, and security considerations.
        \item The conference expects that many papers will be foundational research and not tied to particular applications, let alone deployments. However, if there is a direct path to any negative applications, the authors should point it out. For example, it is legitimate to point out that an improvement in the quality of generative models could be used to generate Deepfakes for disinformation. On the other hand, it is not needed to point out that a generic algorithm for optimizing neural networks could enable people to train models that generate Deepfakes faster.
        \item The authors should consider possible harms that could arise when the technology is being used as intended and functioning correctly, harms that could arise when the technology is being used as intended but gives incorrect results, and harms following from (intentional or unintentional) misuse of the technology.
        \item If there are negative societal impacts, the authors could also discuss possible mitigation strategies (e.g., gated release of models, providing defenses in addition to attacks, mechanisms for monitoring misuse, mechanisms to monitor how a system learns from feedback over time, improving the efficiency and accessibility of ML).
    \end{itemize}
    
\item {\bf Safeguards}
    \item[] Question: Does the paper describe safeguards that have been put in place for responsible release of data or models that have a high risk for misuse (e.g., pre-trained language models, image generators, or scraped datasets)?
    \item[] Answer: \answerNA{}
    \item[] Justification: No high-risk system deployment is involved.
    \item[] Guidelines:
    \begin{itemize}
        \item The answer \answerNA{} means that the paper poses no such risks.
        \item Released models that have a high risk for misuse or dual-use should be released with necessary safeguards to allow for controlled use of the model, for example by requiring that users adhere to usage guidelines or restrictions to access the model or implementing safety filters. 
        \item Datasets that have been scraped from the Internet could pose safety risks. The authors should describe how they avoided releasing unsafe images.
        \item We recognize that providing effective safeguards is challenging, and many papers do not require this, but we encourage authors to take this into account and make a best faith effort.
    \end{itemize}

\item {\bf Licenses for existing assets}
    \item[] Question: Are the creators or original owners of assets (e.g., code, data, models), used in the paper, properly credited and are the license and terms of use explicitly mentioned and properly respected?
    \item[] Answer: \answerYes{}
    \item[] Justification: Public datasets and standard libraries are used.
    \item[] Guidelines:
    \begin{itemize}
        \item The answer \answerNA{} means that the paper does not use existing assets.
        \item The authors should cite the original paper that produced the code package or dataset.
        \item The authors should state which version of the asset is used and, if possible, include a URL.
        \item The name of the license (e.g., CC-BY 4.0) should be included for each asset.
        \item For scraped data from a particular source (e.g., website), the copyright and terms of service of that source should be provided.
        \item If assets are released, the license, copyright information, and terms of use in the package should be provided. For popular datasets, \url{paperswithcode.com/datasets} has curated licenses for some datasets. Their licensing guide can help determine the license of a dataset.
        \item For existing datasets that are re-packaged, both the original license and the license of the derived asset (if it has changed) should be provided.
        \item If this information is not available online, the authors are encouraged to reach out to the asset's creators.
    \end{itemize}

\item {\bf New assets}
    \item[] Question: Are new assets introduced in the paper well documented and is the documentation provided alongside the assets?
    \item[] Answer: \answerNA{}
    \item[] Justification: No new datasets are released.
    \item[] Guidelines:
    \begin{itemize}
        \item The answer \answerNA{} means that the paper does not release new assets.
        \item Researchers should communicate the details of the dataset\slash code\slash model as part of their submissions via structured templates. This includes details about training, license, limitations, etc. 
        \item The paper should discuss whether and how consent was obtained from people whose asset is used.
        \item At submission time, remember to anonymize your assets (if applicable). You can either create an anonymized URL or include an anonymized zip file.
    \end{itemize}

\item {\bf Crowdsourcing and research with human subjects}
    \item[] Question: For crowdsourcing experiments and research with human subjects, does the paper include the full text of instructions given to participants and screenshots, if applicable, as well as details about compensation (if any)? 
    \item[] Answer: \answerNA{}
    \item[] Justification: Not applicable.
    \item[] Guidelines:
    \begin{itemize}
        \item The answer \answerNA{} means that the paper does not involve crowdsourcing nor research with human subjects.
        \item Including this information in the supplemental material is fine, but if the main contribution of the paper involves human subjects, then as much detail as possible should be included in the main paper. 
        \item According to the NeurIPS Code of Ethics, workers involved in data collection, curation, or other labor should be paid at least the minimum wage in the country of the data collector. 
    \end{itemize}

\item {\bf Institutional review board (IRB) approvals or equivalent for research with human subjects}
    \item[] Question: Does the paper describe potential risks incurred by study participants, whether such risks were disclosed to the subjects, and whether Institutional Review Board (IRB) approvals (or an equivalent approval/review based on the requirements of your country or institution) were obtained?
    \item[] Answer: \answerNA{}
    \item[] Justification: Not applicable.
    \item[] Guidelines:
    \begin{itemize}
        \item The answer \answerNA{} means that the paper does not involve crowdsourcing nor research with human subjects.
        \item Depending on the country in which research is conducted, IRB approval (or equivalent) may be required for any human subjects research. If you obtained IRB approval, you should clearly state this in the paper. 
        \item We recognize that the procedures for this may vary significantly between institutions and locations, and we expect authors to adhere to the NeurIPS Code of Ethics and the guidelines for their institution. 
        \item For initial submissions, do not include any information that would break anonymity (if applicable), such as the institution conducting the review.
    \end{itemize}

\item {\bf Declaration of LLM usage}
    \item[] Question: Does the paper describe the usage of LLMs if it is an important, original, or non-standard component of the core methods in this research? Note that if the LLM is used only for writing, editing, or formatting purposes and does \emph{not} impact the core methodology, scientific rigor, or originality of the research, declaration is not required.
    %this research? 
    \item[] Answer: \answerNA{}
    \item[] Justification: No claim is made that LLMs are part of the core method.
    \item[] Guidelines:
    \begin{itemize}
        \item The answer \answerNA{} means that the core method development in this research does not involve LLMs as any important, original, or non-standard components.
        \item Please refer to our LLM policy in the NeurIPS handbook for what should or should not be described.
    \end{itemize}

\end{enumerate}

\end{document}